\newcommand{\F}{{\mathcal{F}}}
\newcommand{\X}{{\mathcal{X}}}
\renewcommand{\H}{{\mathcal{H}}}
\newcommand{\E}{{\mathbb{E}}}
\newcommand{\R}{\mathbb{R}}
\newcommand{\Rn}{{\mathbb{R}^n}}
\newcommand{\y}{{\bm y}}
\newcommand{\ssf}{\mathsf{f}}
\newcommand{\ssg}{\mathsf{g}}
\newcommand{\supp}{\mathrm{supp}}
\newtheorem{theorem}{Theorem}
\newtheorem{lemma}{Lemma}
\newtheorem{assumption}{Assumption}
\newcommand{\argmax}{\operatornamewithlimits{argmax}}
\newcommand{\argmin}{\operatornamewithlimits{argmin}}
\begin{document}


\twocolumn[

\aistatstitle{Simulator Calibration under Covariate Shift with Kernels}
\aistatsauthor{ Keiichi Kisamori \And Motonobu Kanagawa \And  Keisuke Yamazaki}
\aistatsaddress{ NEC and AIST, Japan
\And  EURECOM, France 
\And AIST, Japan}
]

\begin{abstract}


We propose a novel calibration method for computer simulators, dealing with the problem of covariate shift.
Covariate shift is the situation where input distributions for training and test are different, and ubiquitous in applications of simulations. 
Our approach is based on Bayesian inference with kernel mean embedding of distributions, and on the use of an importance-weighted reproducing kernel for covariate shift adaptation.
We provide a theoretical analysis for the proposed method, including a novel theoretical result for conditional mean embedding, as well as empirical investigations suggesting its effectiveness in practice.
The experiments include calibration of a widely used simulator for industrial manufacturing processes, where we also demonstrate how the proposed method may be useful for sensitivity analysis of model parameters.
\end{abstract}

\section{Introduction}
\label{sec:Introduction}

\begin{figure}[t]
  \centering
  \includegraphics[width=\linewidth]{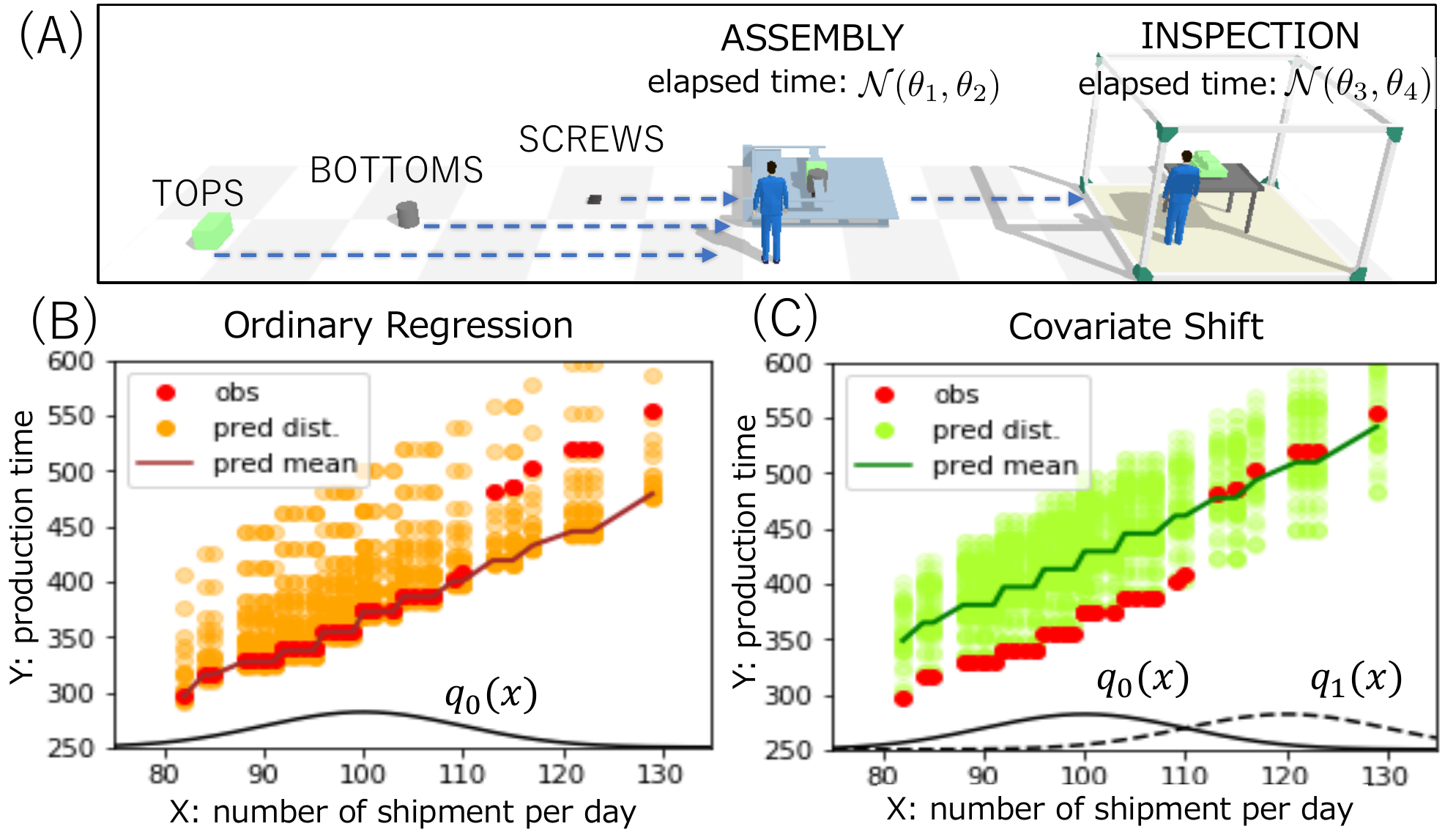}
  \caption{(A) Illustration of a manufacturing process simulator for assembling products. In the factory, one product is made from three items (TOPS, BOTTOMS and SCREWS) by the ASSEMBLY machine, and four such products are checked by the INSPECTION machine at the same time.
Parameter $\theta$ of the simulation model $r(x,\theta)$ consists of 4 constants:  mean $\theta_1$ and variance $\theta_2$ of the distribution of the processing time in the ASSEMBLY machine, and those (described as $\theta_3$ and $\theta_4$) in the INSPECTION machine.
  (B) Results of our method {\em without} covariate shift adaptation: training data (red points), generated predictive outputs (orange) and their means (brown curve).
  (C) Results of our method {\em with} covariate shift adaptation: training data (red points), generated predictive outputs (light green) and their means (green curve). 
  $q_0(x)$ and $q_1(x)$ are input densities for training and prediction, respectively. 
  More details in Secs.~\ref{sec:Introduction} and \ref{subsec:ExperimentProductSimulator}.}
  \label{fig:SimpleAssemblyModel}
\end{figure}

Computer simulators are ubiquitous in many areas of science and engineering, examples including climate science, social science, and epidemics, to just name a few \citep{winsberg2010science,weisberg2012simulation}.  
Such tools are useful in understanding and predicting complicated time-evolving phenomena of interest.
Computer simulators are also widely used in industrial manufacturing process modeling \citep{Mourtzis2014}, and we use one such simulator described in Fig.~\ref{fig:SimpleAssemblyModel}-(A), which models an assembling process of certain products in a factory, as our working example. 
 
In this work we deal with the task of {\em simulator calibration} \citep{KenHag01}, which is necessary to make simulation-based predictions reliable. 
To describe this, we introduce some notation used in the paper. 
We are interested in a system $R(x)$ that takes $x$ as an input and output $y = R(x) + \varepsilon$ possibly corrupted by a noise $\varepsilon$.
This system $R(x)$ is of interest but not known.
Instead, we are given data $(X_i,Y_i)_{i=1}^n$ from the system,  where input locations $X_1,\dots,X_n$ are generated from a distribution $q_0(x)$ and outputs $Y_1,\dots,Y_n$ from the target system $Y_i = R(X_i) + \varepsilon_i$. 
On the other hand, a simulator is defined as a function $r(x,\theta)$ that takes $x$ as an input and outputs $r(x,\theta)$, where $\theta$ is a model parameter. 
The task of simulator calibration is to tune (or estimate) the parameter $\theta$ so that the $r(x,\theta)$ ``approximates well'' the unknown target system $R(x)$ by using the data  $(X_i,Y_i)_{i=1}^n$.
For instance, in Fig.~\ref{fig:SimpleAssemblyModel}, the target system $R(x)$ takes as an input the number $x$ of required products to be manufactured in one day, and outputs the total time $y = R(x) + \varepsilon$ required for producing all the products; the simulator $r(x,\theta)$ models this process (see the ``pred mean'' curves in Fig.~\ref{fig:SimpleAssemblyModel}-(B)(C)).

There are mainly two challenges in the task of simulator calibration, which distinguish it from standard statistical learning problems. 
The first one owes to the complexity of the simulation model.
Very often, a simulation model $r(x,\theta)$ cannot be written as a simple function of the input $x$ and parameter $\theta$, because the process of producing the output $y=r(x,\theta)$ may involve various numerical algorithms (e.g., solutions for differential equations) and/or IF-ELSE type decision rules of multiple agents.
Therefore, one cannot access the gradient of the simulator output $r(x,\theta)$ with respect to the parameter $\theta$, and thus calibration cannot reply on gradient-based methods for optimization (e.g., gradient descent) and sampling (e.g., Hamiltonian Monte Carlo).
Moreover, one simulation $y = r(x,\theta)$ for a given input $x$ can be computationally very expensive. 
Thus only a limited number of simulations can be performed for calibration.
To summarise, the first challenge is that calibration should be done by only making use of forward simulations (or evaluations of $r(x,\theta)$), while the number of simulations cannot be large.

The second challenge is that of {\em covariate shift} (or {\em sample selection bias})  \citep{Shimodaira00,Sugiyama2012}, which is ubiquitous in applications of simulations, but has been rarely discussed in the literature on calibration methods.
The situation is that the input distribution $q_1(x)$ for the test (or prediction) phase is {\em different} from the input distribution $q_0(x)$ generating the training input locations $X_1,\dots,X_n$. 
In other words, the parameter $\theta$ is to be tuned so that the simulator $r(x,\theta)$ accurately approximates the target system $R(x)$ with respect to the distribution $q_1(x)$ (e.g., the error defined as $\int (R(x) - r(x,\theta))^2 q_1(x) dx$ is to be small), while training data $(X_i,Y_i)_{i=1}^n$ are only given with respect to another distribution $q_0(x)$.

The covariate shift setting is inherently important and ubiquitous in applications of computer simulation, because the purpose of a simulation is often in {\em extrapolation}.
An illustrative example is climate simulations, where the aim is to answer whether global warming will occur in the future. 
As such,  input $x$ is a time point and the target system $R(x)$ is the global temperature.
Calibration of the simulator $r(x,\theta)$ is to be done based on data from the past, but prediction is required for the future. 
This means that training input distribution $q_0(x)$ has a support in the past, but that of test $q_1(x)$ has a support on the future. 
For our working example in Fig.~\ref{fig:SimpleAssemblyModel}, training input locations $X_1,\dots,X_n$ from $q_0(x)$ are more densely distributed in the region $x<110$ than the region $x \geq 110$, since the data are obtained in a trial period.
On the other hand, the test phase (i.e., when the factory is deployed) is targeted on mass production, and thus the test input distribution $q_1(x)$ has mass concentrated in the region $x \geq 110$.

Being a parametric model, a simulator only has a finite degree of freedom, and thus cannot capture all the aspects of the target system. 
Under such a model misspecification, the covariate shift is known to have a huge effect: the optimal model for the test input distribution may be drastically different from that for the training input distribution \citep{Shimodaira00}. 
In climate simulations,  care must be taken in how to tune the simulator as the data are only from the past;  otherwise, the resulting predictions about the future will not be reliable \citep{winsberg2018philosophy}.
In the example of Fig.~\ref{fig:SimpleAssemblyModel},  the behavior of the target system $R(x)$ changes for the trial and test phases: Figs.~\ref{fig:SimpleAssemblyModel}-(B)(C) describe this situation.
As can be seen in training data (red points), the total manufacturing time $R(x)$ becomes significantly larger when the number $x$ of required products is greater than $x = 110$, because of of the overload of workers and machines. 
However, such structural change of the target $R(x)$ is not modeled in the simulator $r(x,\theta)$ (model misspecification).
Thus, if calibration is done without taking the covariate shift into account,  the resulting simulator makes predictions that fit well to the data in the region  $x<110$, but do not fit well in the region $x \geq 110$, as described in Fig.~\ref{fig:SimpleAssemblyModel}-(B).

Because of the first challenge of simulator calibration, exiting methods for covariate shift adaptation, which have been developed for standard statistical and machine learning approaches, cannot be directly employed for the simulator calibration problem: see e.g., \citet{Shimodaira00,Yamazaki2007,Gretton2009,Sugiyama2012} and references therein. 
On the other hand, existing approaches to likelihood-free inference, such as Approximate Bayesian Computation (ABC) methods (e.g.\cite{Csillery2010,Marin2012,Nakagome2013}), are applicable to simulator calibration, but they do not address the problem of covariate shift. 
Our approach combines these two approaches and thus enjoys the best of both worlds, offering a solution to the calibration problem with covariate shift adaptation.

This work proposes a novel approach to simulator calibration, dealing explicitly with the setting of covariate shift. 
Our approach is Bayesian, deriving a certain posterior distribution over the parameter space given observed data.
The proposed method is based on Kernel ABC \citep{Nakagome2013,FukSonGre13}, which is an approach to ABC based on  kernel mean embedding of distributions~\citep{MuaFukSriSch17}, and a certain importance-weighted kernel that works for covariate shift adaptation. 
We provide a theoretical analysis of this approach, showing that it produces a distribution over the parameter space that approximates the posterior distribution in which the ``observed data'' is predictions from the model that minimises the importance-weighted empirical risk. 
In other words, the proposed method approximates the posterior distribution whose support consists of parameters such that the resulting simulator produces a small generalization error for the test input distribution.
For instance, Fig.~\ref{fig:SimpleAssemblyModel}-(C) shows predictions obtained with our method, which fit well in the test region $x \geq 110$ as a result of covariate shift adaptation. 

This paper is organized as follows.
In Sec.~\ref{sec:Background}, we briefly review the setting of covariate shift and the framework of kernel mean embedding.
In Sec.~\ref{sec:ProposedMethod}, we present our method for simulator calibration with covariate shift adaptation, and in Sec.~\ref{sec:theory} we investigate its theoretical properties.
In Sec.~\ref{sec:experiments} we report results of numerical experiments that include calibration of the production simulator in Fig.~\ref{fig:SimpleAssemblyModel}, confirming the effectiveness of the proposed method.
Additional experimental results and all the theoretical proofs are presented in Appendix.

\section{Background}
\label{sec:Background}

We here introduce some notation and definitions used in the paper, by  reviewing the problem setting of covariate shift, and the framework of kernel mean embeddings.

\subsection{Calibration under Covariate Shift}
\label{subsec:BayesianInference}
Let $\mathcal{X} \subset \mathbb{R}^{d_\X}$ with $d_\X \in \mathbb{N}$ be a measurable subset that serves as the input space for a target system and a simulator.
Denote by $R: \X \to \mathbb{R}$ the regression function of the (unknown) target system, which is deterministic, and define the true data-generating process as
\begin{equation} \label{eq:data-gen-process}
y(x) := R(x) + e(x),
\end{equation}
where $e:\X \to \mathbb{R}$ is a (zero-mean) stochastic process that represent error in observations. 
Observed data $D_n := \{ (X_i, Y_i) \}_{i=1}^n \subset \X \times \mathbb{R}$ are assumed to be generated from the process \eqref{eq:data-gen-process} as 
\begin{eqnarray*}
 X_1,\dots,X_n \sim q_0 \ \ (\mathrm{i.i.d.}), 
\quad Y_i = y(X_i), \ \ (i = 1,\dots,n), 
\end{eqnarray*}
where $q_0$ is a probability density function on $\X$. 
We use the following notation to write the output values:
$$
Y^n := (Y_1,\dots,Y_n) \in \mathbb{R}^n.
$$

Let $\Theta \subset \R^{d_\Theta}$ with $d_\Theta \in \mathbb{N}$ be a measurable subset that serves as a parameter space.
Let $$r: \X \times \Theta \to \R$$ be a (measurable) deterministic simulation model that outputs a real value $r(x,\theta) \in \R$ given an input $x \in \X$ and a parameter $\theta \in \Theta$.
Assume that we have a prior distribution $\pi(\theta)$ on the parameter space $\Theta$.

In the setting of {\em covariate shift}, the input distribution $q_1(x)$ in the test or prediction phase is different from that $q_0(x)$ for training data $X_1,\dots,X_n$, while the input-output relationship \eqref{eq:data-gen-process} remains the same.
Thus, the expected loss (or the generalization error) to be minimized may be defined as 
\begin{eqnarray*}
L(\theta) &:=& \int \left( y(x) - r(x,\theta) \right)^2 q_1(x) dx \nonumber \\  
&=& \int \left( y(x) - r(x,\theta) \right)^2 \beta(x) q_0(x) dx \label{eq:cov-shift-risk},
\end{eqnarray*}
where $\beta: \X \to \R$ is the {\em importance weight} function, defined as the ratio of the two input densities:  
$$
 \beta (x) := q_1(x) / q_0 (x).
$$
In this work, we assume for simplicity that importance weights $\beta(X_i)$ at training inputs $X_1,\dots,X_n$ are known, or estimated in advance.
The knowledge of the importance weights is available when $q_0(x)$ and $q_1(x)$ are designed by an experimenter. 
For estimation of the importance, we refer to \citet{Gretton2009,book:Sugiyama+etal:2012} and references therein.\footnote{Note that kernel mean matching \citep{Gretton2009} is a method for estimating the importance weights $\beta(X_1),\dots,\beta(X_n)$, while it is based on kernel mean embeddings as in our method. In this sense, that approach deals with a problem different from ours.}
Using the importance weights, the expected loss can be estimated as 
\begin{equation} \label{eq:importance-weighted-emp-loss}
L_n(\theta) := \frac{1}{n} \sum_{i=1}^n \beta(X_i) \left( Y_i - r(X_i,\theta) \right)^2.
\end{equation}
Covariate shift has a strong inference of the generalization performance of an estimated model, when the true regression function $R(x)$ does not belong to the class of functions realizable by the simulation model $\{ r(\cdot, \theta) \mid \theta \in \Theta \}$, i.e., when  {\em model misspecification} occurs  \citep{Shimodaira00,Yamazaki2007}.
Such a misspecification happens in practice, since the simulation model only has a finite degree of freedom, as the parameter space is finite dimensional.
To obtain a model with a good prediction performance, one needs to use an importance-weighted loss like \eqref{eq:importance-weighted-emp-loss} for parameter estimation.


\subsection{Kernel Mean Embedding of Distributions}
This is a framework for representing  probability measures as elements in an Reproducing Kernel Hilbert Space (RKHS).
We refer to \citet{MuaFukSriSch17} and references therein for details.

Let $\Omega$ be a measurable space, $k: \Omega \times \Omega \to \mathbb{R}$ be a measurable positive definite kernel and $\mathcal{H}$ be its RKHS. 
In this framework, any probability measure $P$ on $\Omega$ is represented as a Bochner integral
\begin{equation*}  
\mu_P := \int k(\cdot, \theta)dP(\theta) \in \mathcal{H},
\end{equation*}
which is called the {\em kernel mean} of $P$.
Estimation of $P$ can be carried out by that of $\mu_P$, which is usually computationally and statistically easier, thanks to nice properties of the RKHS.
Such a strategy is justified if the mapping $P \to \mu_P$ is injective, in which case $\mu_P$ maintains all information of $P$.  
Kernels satisfying this property are called characteristic, and examples of characteristic kernels on $\Omega = \mathbb{R}^d$ include Gaussian and Mat\'ern kernels \citep{SriGreFukSchetal10}.

\section{Proposed Calibration Method}
\label{sec:ProposedMethod}

We present our approach to simulator calibration with covariate shift adaptation. 
We take a Bayesian approach,  and our target posterior distribution is described in Sec.~\ref{sec:target-posterior}.
The proposed approach consists of Kernel ABC using a certain importance-weighted kernel (Sec.~\ref{sec:KABC-weighted}) and posterior sampling with the kernel herding algorithm (Sec.~\ref{sec:sampling-KH}).

\subsection{Target Posterior Distribution}

\label{sec:target-posterior}

We define a vector-valued function $r^n: \Theta \to \mathbb{R}^n$ from the simulator $r(x,\theta)$ as
\begin{equation} \label{eq:sim-vector-valued}
r^n(\theta) := (r(X_1),\dots,r(X_n))^\top \in \mathbb{R}^n, \quad \theta \in \Theta.
\end{equation}
Let $\mathrm{supp}(\pi)$ be the support of $\pi$.
Define  $\Theta^* \subset \mathrm{supp}(\pi)$ as the set of parameters that minimize the weighted square error, i.e., for all $\theta \in \Theta^*$ we have
\begin{eqnarray}  
&& \sum_{i=1}^n \beta(X_i) (Y_i - r(X_i,\theta^*) )^2 = \nonumber \\
&& \min_{\theta \in \mathrm{supp}(\pi)} \sum_{i=1}^n \beta(X_i) (Y_i - r(X_i,\theta) )^2 . \label{eq:optimal-parameters}
\end{eqnarray}
We allow for $\Theta^*$ to contain multiple elements, but assume that they all give the same simulation outputs, which we denote by $r^* \in \mathbb{R}^n$:
\begin{equation} \label{eq:unique-sim-outputs}
r^* := r^n(\theta^*) = r^n (\tilde{\theta^*}), \quad \forall  \theta^*, \tilde{\theta^*} \in \Theta^*.
\end{equation}

Let $\vartheta \sim \pi$ be a random variable following $\pi$.
Then $r^n(\vartheta)$ is also a random variable taking values in $\mathbb{R}^n$ and its distribution is the {\em push-forward measure} of $\pi$ under the mapping  $r^n$, denoted by $r^n\pi$. 
We write the distribution of the joint random variable $$(\vartheta, r^n(\vartheta)) \in \Theta \times \R^n$$ as $P_{\Theta \R^n}$, and their marginal distributions on $\Theta$ and $\R^n$ as $P_\Theta$ and $P_{\R^n}$, respectively. 
Then by definition we have $P_\Theta = \pi$ and $P_\Rn = r^n\pi$.
 Let $$\mathrm{supp}(P_\Rn) = \mathrm{supp}(r^n \pi) = \{ r^n(\theta) \mid \theta \in \mathrm{supp}(\pi) \}$$ be the support of the push-forward measure, which is the range of the simulation outputs when the parameter is in the support of the prior.

We consider the conditional distribution on $\Theta$ induced from the joint distribution $P_{\Theta \Rn}$ by conditioning on $\y \in  \supp(P_\Rn)$, which we write 
\begin{equation} \label{eq:cond-dist}
P_\pi(\theta | \y), \quad \y \in \supp(P_\Rn)
\end{equation}
Note that, since the conditional distribution on $\R^n$ given $\theta \in \Theta$ is  the Dirac distribution at $r^n(\theta)$, one cannot use Bayes' rule to define the conditional distribution.
However, the conditional distribution \eqref{eq:cond-dist} is well-defined as a {\em disintegration}, and is uniquely determined up to an almost sure equivalence with respect to $P_\Rn$ \citep[Thm.~1 and Example 9]{ChaPol97};  see also \citet[Sec.~2.5]{CocOatSulGir17}.

It will turn out in  Sec.~\ref{sec:theory} that our approach provides an estimator for the kernel mean of the conditional distribution \eqref{eq:cond-dist} with $\y =r^*$:
\begin{equation} \label{eq:posterior}
P_\pi(\theta | r^*) 
\end{equation}
where $r^*$ is the outputs of the optimal simulator \eqref{eq:unique-sim-outputs}.
In other words, \eqref{eq:posterior} is the posterior distribution on the parameters, given that the optimal outputs $r^*$ are observed. 
Sampling from \eqref{eq:posterior} thus amounts to sampling parameters that provide the optimal simulation outputs.

Finally, we define a predictive distribution of outputs $y$ for any input point $x \in \X$ as the push-forward measure of the posterior \eqref{eq:posterior} under the mapping $r(x,\cdot): \theta \to r(x,\theta)$, which we denote by
\begin{equation} \label{eq:predictive-dist}
P_\pi(y | x, r^*).
\end{equation}

\subsection{Kernel ABC with a Weighted Kernel}
\label{sec:KABC-weighted}

Let  $k_\Theta : \Theta \times \Theta \to \R$ be a kernel on the parameter space and $\H_\Theta$ be its its RKHS. 
We define the kernel mean of the posterior \eqref{eq:posterior} as
\begin{equation} \label{eq:post-kmean-def}
\mu_{\Theta | r^*} := \int k_\Theta (\cdot,\theta) dP_\pi(\theta | r^*) \in \H_\Theta,
\end{equation}

We propose to use the following weighted kernel on $\Rn$ defined from importance weights.
As mentioned, we assume that the importance weight function $\beta (x) = q_1(x)/q_0(x)$ is known or estimated in advance. 
For $Y^n, \tilde{Y}^n \in \R^n$, the kernel is defined as 
\begin{equation}  
 k_\Rn(Y^n, \tilde{Y}^n) =  
 \exp \left( -\frac{1}{2\sigma ^2} \sum_{i=1} ^n \beta(X_i) ( Y_i - \tilde{Y}_i )^2 \right),  \label{eq:weighted-kernel}
\end{equation}
where $\sigma^2 > 0$ is a constant and a parameter of the kernel.

We apply Kernel ABC \citep{Nakagome2013} with the importance-weighted kernel defined above, to estimate the posterior kernel mean \eqref{eq:post-kmean-def}. 
First, we independently generate $m \in \mathbb{N}$ parameters from the prior $\pi(\theta)$
$$
\bar{\theta}_1,\dots,\bar{\theta}_m \sim \pi.
$$
Then for each parameter $\bar{\theta}_j,  j=1,\dots,m$, we run the simulator to generate pseudo observations at $X_1,\dots,X_n$:
$$
\bar{Y}^n_j := r^n(\bar{\theta}_j), \quad j = 1,\dots,m,
$$
where $r^n: \Theta \to \Rn$ is defined in \eqref{eq:sim-vector-valued}.
Then an estimator of the kernel mean \eqref{eq:post-kmean-def} is given by
\begin{eqnarray}
&&\hat{\mu}_{\Theta | r^*} := \sum_{j=1} ^m w_j k_\Theta(\cdot, \bar{\theta}_j) \ \in \H_\Theta,  \label{eq:kernel_postmean}  \label{eq:w2} \\
&& (w_1,..., w_m)^\top  \nonumber 
:= (G + m \varepsilon I_m)^{-1} {\bf{k}}_\Rn (Y^n) \in \mathbb{R}^m,
\end{eqnarray}
where $I_m \in \mathbb{R}^{m \times m}$ is the identity and $\varepsilon > 0$ is a regularization constant; the vector ${\bf k}_\Rn(Y^n) \in \mathbb{R}^m$ and the Gram matrix $G \in \mathbb{R}^{m \times m}$ are computed from the kernel $k_\Rn$ in \eqref{eq:weighted-kernel} with the observed data $Y^n$ as 
\begin{eqnarray*}
{\bf k}_\Rn(Y^n) &:=& (k_\Rn(\bar{Y}^n_1, Y^n) ,\dots, k_\Rn(\bar{Y}^n_m, Y^n))^\top  \in \mathbb{R}^m \\ \label{eq:kernel1}
G &:=&  (k_\Rn(\bar{Y}_j ^n, \bar{Y}_{j'} ^n)) _{j, j' = 1} ^m \in \mathbb{R}^{m\times m}. \label{grammian}
\end{eqnarray*}

\subsection{Posterior Sampling with Kernel Herding}
\label{sec:sampling-KH}
We apply Kernel herding~\citep{CheWelSmo10}, a deterministic sampling method based on kernel mean embedding, to generate parameters $\check{\theta}_1, ..., \check{\theta}_m \in \Theta$ from the posterior kernel mean $\hat{\mu}_{\Theta | r^*}$ in  (\ref{eq:kernel_postmean}).
The procedure is as follows.
The initial point $\check{\theta}_1$ is generated as $\check{\theta}_1 := \argmax_{\theta \in \Theta} \hat{\mu}_{\Theta | r^*}(\theta)$.
Then the subsequent points $\check{\theta}_t$, $t = 2,\dots,m$, are generated sequentially as
$$
\check{\theta}_t := \argmax_{\theta \in \Theta} \hat{\mu}_{\Theta | r^*}(\theta) - \frac{1}{t} \sum_{j=1}^{t-1} k_\Theta(\theta, \check{\theta}_j).
$$
These points are a sample from the approximate posterior, in the sense that they satisfy  $\|  \hat{\mu}_{\Theta | r^*} - \frac{1}{t} \sum_{j=1}^t k_\Theta(\cdot, \check{\theta}_j)   \|_{\H_\Theta} = O(t^{-1/2})$ under a mild condition \citep{BacJulObo12}.

{\bf Prediction.} Let $x \in \X$ be any test input location, and recall that the predictive distribution $P_\pi(y|x, r^*) $ in \eqref{eq:predictive-dist} is defined as the push-forward measure of the posterior $P_\pi(\theta | r^*)$ under the mapping $r(x, \cdot)$.
Therefore, predictive outputs can be obtained simply by running simulations with the posterior samples $\check{\theta}_1, \dots, \check{\theta}_m$:
$$
r(x,\check{\theta}_1), \dots, r(x, \check{\theta}_m),
$$
and the predictive distribution is approximated by the empirical distribution
$$
\hat{P}_\pi(y|x, r^*)  := \frac{1}{m} \sum_{j=1}^m \delta(y - r(x, \check{\theta}_j)),
$$
where $\delta(\cdot)$ is the Dirac distribution at $0$.

\section{Theoretical Analysis}
\label{sec:theory}

To analyze the proposed method, we first express the estimator \eqref{eq:kernel_postmean} in terms of {\em covariance operators} on the RKHSs, which is how the estimator was originally proposed \citep{Song2009,Nakagome2013}.
To this end, define joint random variables $(\vartheta, \y) \in \Theta \times \Rn$ by 
$$
\vartheta \sim \pi, \quad \y:= r^n(\vartheta),
$$
where $r^n :\Theta \to \R^n$ is defined in \eqref{eq:sim-vector-valued}.
Let $\H_\Theta$ and $\H_\Rn$ be the RKHSs of $k_\Theta$ and $k_\Rn$, respectively. 

Covariance operators $C_{\vartheta \y} : \H_\Rn \to \H_\Theta$ and $C_{\y \y} : \H_\Rn \to \H_\Rn$  are then defined as
\begin{eqnarray*}
 C_{\vartheta \y} f &:=& \E[ k_\Theta(\cdot,\vartheta) f(\y) ]  \in \H_\Theta, \quad f \in \H_\Rn, \\
C_{\y \y}f &:=& \E[ k_\Rn(\cdot,\y) f(\y) ]  \in \H_\Rn, \quad f \in \H_\Rn.
\end{eqnarray*}
Note that parameter-data pairs $(\bar{\theta}_j, \bar{Y}^n_j)_{j=1}^m = (\bar{\theta}_j, r^n(\bar{\theta}_j) )_{j=1}^m \subset \Theta \times \R^n$  in Kernel ABC (Sec.~\ref{sec:KABC-weighted}) are i.i.d.~copies of the random variables $(\vartheta, \y)$.
Thus empirical covariance operators $\hat{C}_{\vartheta \y} : \H_\Rn \to \H_\Theta$ and $\hat{C}_{\y \y} : \H_\Rn \to \H_\Rn$ are defined as
\begin{eqnarray*}
&& \hat{C}_{\vartheta \y} f := \frac{1}{m} \sum_{j = 1}^m k_\Theta(\cdot,\bar{\theta}_j) f(\bar{Y}^n_j), \quad f \in \H_\Rn, \\
&& \hat{C}_{\y \y}f := \frac{1}{m} \sum_{j = 1}^m k_\Rn(\cdot, \bar{Y}^n_j ) f(\bar{Y}^n_j ), \quad f \in \H_\Rn.
\end{eqnarray*}
The estimator \eqref{eq:kernel_postmean} is then expressed as
\begin{equation} \label{eq:est-cov-expression}
\hat{\mu}_{\Theta | r^*} = \hat{C}_{\vartheta \y}( \hat{C}_{\y \y} + \varepsilon I )^{-1} k_\Rn(\cdot,Y^n).
\end{equation}
See the above original references as well as \citet{SonFukGre13,FukSonGre13,MuaFukSriSch17} for the derivation.

Recall that $Y^n$ is the observed data from the real process. 
The issue is that, in our setting, $Y^n$ may {\em not} lie in the support of the distribution $P_\Rn$ of $\y = r^n(\vartheta)$, since the simulation model $r(\theta,x)$ is misspecified, i.e., there exists no $\theta \in \Theta$ such that $R(x) = r(x,\theta)$ for all $x \in \X$.
The misspecified setting where $Y^n \not\in \supp(P_\Rn)$ has not been studied in the literature on kernel mean embeddings, and therefore existing theoretical results on conditional mean embeddings 
\citep{GruLevBalPatetal12,Fuk15,SinSahGre19} are not directly applicable.
Our theoretical contribution is to study the estimator \eqref{eq:est-cov-expression} in this misspecified setting, which may be of general interest.

\subsection{Projection and Best Approximation}
Let $\H_{\y} \subset \H_\Rn$ be the Hilbert subspace of $\H_\Rn$ defined as the completion of the linear span of functions $k_\Rn(\cdot,\tilde{Y}^n)$ with $\tilde{Y}^n$ from the support of $P_\Rn$: 
\begin{equation} \label{eq:Hilbert-subspace-support}
\H_{\y} := \overline{\mathrm{span}\left\{ k_\Rn(\cdot,\tilde{Y}^n) \mid  \tilde{Y}^n\in \mathrm{supp}(P_\Rn) \right\}},
\end{equation}
where the closure is taken with respect to the norm of $\H_\Rn$.
In other words, every $h \in \H_\y$ may be written in the form $h = \sum_{\ell = 1}^\infty \alpha_\ell k_\Rn(\cdot, \tilde{Y}_\ell^n)$ for some $(\alpha_\ell)_{\ell = 1}^\infty \subset \R$ and $(\tilde{Y}^n_\ell)_{\ell = 1}^\infty \subset \supp(P_\Rn)$ such that $\|h\|_{\H_\Rn}^2 = \sum_{\ell, j = 1}^\infty \alpha_\ell \alpha_j k_\Rn(\tilde{Y}^n_\ell, \tilde{Y}^n_j) < \infty$.

Since $\H_y$ is a Hilbert subspace, one can consider the orthogonal projection of $k_\Rn(\cdot,Y^n)$, the ``feature vector''  of the observed data $Y^n$, onto $\H_{\y}$, which is uniquely determined and denoted by
\begin{equation} \label{eq:projection}
    h^* := \argmin_{h \in \H_{\y}} \| h - k_\Rn(\cdot,Y^n) \|_{\H_\Rn}.
\end{equation}
Then $k_\Rn(\cdot,Y^n)$ can be written as $$
k_\Rn(\cdot,Y^n) = h^* + h_{\perp},
$$
where $h_{\perp} \in \H_\Rn$ is orthogonal to $\H_{\y}$.

Note that the estimator \eqref{eq:est-cov-expression} is an approximation to the following population expression:
\begin{equation} \label{eq:pop-CME-reg}
C_{\vartheta \y}( C_{\y \y} + \varepsilon I )^{-1} k_\Rn(\cdot,Y^n).    
\end{equation}
Our first result below shows that \eqref{eq:pop-CME-reg} can be written in terms of the projection \eqref{eq:projection}.
\begin{lemma} \label{lemma:CME-projection}
Let $k_\Theta$ be a bounded and continuous kernel and assume that $0 < \beta(X_i) < \infty$ holds for all $i = 1,\dots,n$.
Then \eqref{eq:pop-CME-reg} is equal to 
$$
C_{\vartheta \y}( C_{\y \y} + \varepsilon I )^{-1} h^*
$$
\end{lemma}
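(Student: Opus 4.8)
The plan is to show that the ``orthogonal part'' $h_{\perp} := k_\Rn(\cdot,Y^n) - h^*$ is annihilated by both covariance operators (and is only rescaled by $(C_{\y\y}+\varepsilon I)^{-1}$), so that replacing $k_\Rn(\cdot,Y^n)$ by its projection $h^*$ in \eqref{eq:pop-CME-reg} changes nothing. First I would record the standard well-posedness facts that make the manipulation legal: under $0<\beta(X_i)<\infty$ the weighted kernel \eqref{eq:weighted-kernel} is a positive-definite kernel bounded by $1$, so $C_{\y\y}$ is a bounded, self-adjoint, positive operator and $C_{\y\y}+\varepsilon I$ is boundedly invertible for $\varepsilon>0$; boundedness and continuity of $k_\Theta$ make $C_{\vartheta\y}$ a well-defined bounded operator and $\vartheta\mapsto k_\Theta(\cdot,\vartheta)$ a measurable $\H_\Theta$-valued map, so all the Bochner integrals in the definitions exist. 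Then I would decompose $\H_\Rn = \H_\y \oplus \H_\y^{\perp}$ and write $k_\Rn(\cdot,Y^n) = h^* + h_{\perp}$ with $h^* \in \H_\y$ as in \eqref{eq:projection} and $h_{\perp} \in \H_\y^{\perp}$.

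The crux of the argument is the observation that $h_{\perp}$ vanishes on $\supp(P_\Rn)$. Indeed, for any $\tilde{Y}^n \in \supp(P_\Rn)$ the function $k_\Rn(\cdot,\tilde{Y}^n)$ lies in $\H_\y$ by the definition \eqref{eq:Hilbert-subspace-support}, so the reproducing property gives $h_{\perp}(\tilde{Y}^n) = \langle h_{\perp}, k_\Rn(\cdot,\tilde{Y}^n)\rangle_{\H_\Rn} = 0$. Since $\y = r^n(\vartheta)$ takes values in $\supp(P_\Rn)$ with probability one (a Borel probability measure on $\R^n$ is carried by its support), it follows that $h_{\perp}(\y) = 0$ almost surely. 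Substituting this into the defining expectations yields $C_{\y\y}h_{\perp} = \E[k_\Rn(\cdot,\y)\,h_{\perp}(\y)] = 0$ and $C_{\vartheta\y}h_{\perp} = \E[k_\Theta(\cdot,\vartheta)\,h_{\perp}(\y)] = 0$.

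It then only remains to assemble the pieces: from $C_{\y\y}h_{\perp} = 0$ we get $(C_{\y\y}+\varepsilon I)h_{\perp} = \varepsilon h_{\perp}$, hence $(C_{\y\y}+\varepsilon I)^{-1}h_{\perp} = \varepsilon^{-1}h_{\perp}$, and therefore
\begin{align*}
C_{\vartheta\y}(C_{\y\y}+\varepsilon I)^{-1} k_\Rn(\cdot,Y^n)
&= C_{\vartheta\y}(C_{\y\y}+\varepsilon I)^{-1} h^* + C_{\vartheta\y}(C_{\y\y}+\varepsilon I)^{-1} h_{\perp} \\
&= C_{\vartheta\y}(C_{\y\y}+\varepsilon I)^{-1} h^* + \varepsilon^{-1} C_{\vartheta\y} h_{\perp} \\
&= C_{\vartheta\y}(C_{\y\y}+\varepsilon I)^{-1} h^*,
\end{align*}
which is the assertion. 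The only genuinely non-routine step I expect is the measure-theoretic point that $\y \in \supp(P_\Rn)$ almost surely, which in turn forces $h_{\perp}(\y) = 0$ almost surely; the rest is bounded-operator bookkeeping, and the hypotheses on $k_\Theta$ and on the $\beta(X_i)$ are used precisely to guarantee that the operators and integrals involved are well defined.
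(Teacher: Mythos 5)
Your proof is correct, and it takes a genuinely different and in fact more economical route than the paper's. The paper proves the lemma by expanding $(C_{\y \y}+\varepsilon I)^{-1}$ in the eigenbasis of $C_{\y \y}$: it shows separately that $h_\perp$ has no component along the eigenfunctions with positive eigenvalue, and that $C_{\vartheta \y}$ annihilates the null space of $C_{\y \y}$ --- the latter via an auxiliary lemma (Lemma 2 in the appendix) asserting that null-space elements vanish $P_\Rn$-almost everywhere, proved by invoking the $c_0$-universality of the Gaussian-type kernel $k_\Rn$ together with the separability of $\H_\Rn$. You instead isolate the single fact that really drives everything: $h_\perp(\tilde{Y}^n)=\langle h_\perp, k_\Rn(\cdot,\tilde{Y}^n)\rangle_{\H_\Rn}=0$ for all $\tilde{Y}^n\in\supp(P_\Rn)$, hence $h_\perp(\y)=0$ almost surely (since a Borel probability measure on the second-countable space $\R^n$ is carried by its support), so \emph{both} $C_{\y \y}h_\perp=0$ and $C_{\vartheta\y}h_\perp=0$ follow directly from the defining Bochner integrals, and $(C_{\y\y}+\varepsilon I)^{-1}h_\perp=\varepsilon^{-1}h_\perp$ finishes the computation. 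What your argument buys is that it needs neither the spectral decomposition (hence no appeal to compactness of $C_{\y\y}$ or separability of $\H_\Rn$) nor any universality of $k_\Rn$; it works for any bounded measurable kernel on $\R^n$. The key observation $\langle k_\Rn(\cdot,\tilde{Y}^n),h_\perp\rangle=0$ on the support is present in the paper's step (a) as well, but you exploit it pointwise rather than through the eigenexpansion, which shortcuts the entire null-space discussion. One small point worth making explicit if you write this up: $C_{\y\y}+\varepsilon I$ is injective, so from $(C_{\y\y}+\varepsilon I)(\varepsilon^{-1}h_\perp)=h_\perp$ you may indeed conclude $(C_{\y\y}+\varepsilon I)^{-1}h_\perp=\varepsilon^{-1}h_\perp$; you implicitly use this and it is fine.
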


We make the following identifiability assumption.
It is an assumption on the observed data $Y^n$ (or the data generating process \eqref{eq:data-gen-process}), the simulation model $r(x,\theta)$ and the kernel  $k_\Rn$  (or the importance weight function $\beta(x) = q_1(x)/q_0(x)$; see the definition of $k_\Rn$ in \eqref{eq:weighted-kernel}).
\begin{assumption} \label{as:identifiability}
There exists some $\tilde{Y}^n \in \supp(P_\Rn)$ such that $k_\Rn(\cdot,\tilde{Y}^n) = h^*$, where $h^*$ is the orthogonal projection of $k_\Rn(\cdot,Y^n)$ onto the subspace $\H_{\y}$ in \eqref{eq:projection}. 
\end{assumption}
The assumption states that the orthogonal projection of the feature vector $k_\Rn(\cdot,Y^n)$ of observed data $Y^n$ onto $\H_\y$ lies in the set 
\begin{eqnarray*}
\lefteqn{\{ k_\Rn(\cdot,  \tilde{Y}^n) \mid  \tilde{Y}^n \in \supp(P_\Rn)\}} \\
&=& \{ k_\Rn(\cdot,  r^n(\theta)) \mid  \theta \in \supp(\pi)\}.
\end{eqnarray*}
Thus the assumption implies that the best approximation $h^*$ of the observed data is given by the simulation model with some parameter $\theta^* \in \mathrm{supp}(\pi)$, i.e., $h^* = k_\Rn(\cdot, r^n(\theta^*))$. 
Such $\theta^*$ satisfies
\begin{eqnarray*}
\theta^*  &\in& \argmin_{\theta \in \mathrm{supp}(\pi)} \left\| k_\Rn(\cdot,Y^n) -  k_\Rn(\cdot, r(\cdot,\theta)) \right\|_{\H_\Rn}^2 \\
&=& \argmax_{\theta \in \mathrm{supp}(\pi)} k_\Rn (Y^n, r(\cdot,\theta)) \\
&=& \argmax_{\theta \in \mathrm{supp}(\pi)} \exp \left( -\frac{1}{2\sigma ^2} \sum_{i=1}^n \beta(X_i) (Y_i - r(X_i,\theta) )^2 \right) \nonumber \\
&=& \argmin_{\theta \in \mathrm{supp}(\pi)} \sum_{i=1}^n \beta(X_i) (Y_i - r(X_i,\theta) )^2, 
\end{eqnarray*}
where the last identity follows from the exponential function being monotonically increasing.
This shows that, under Assumption \ref{as:identifiability}, the parameter $\theta^*$ realizing the projection is a least weighted-squares solution, and thus belongs to the set $\Theta^*$ defined in \eqref{eq:optimal-parameters}. 
Moreover, since $h^*$ is uniquely determined, so is the simulation outputs $r^* := r^n(\theta^*)$, in the sense of \eqref{eq:unique-sim-outputs}. 

By these arguments, Lemma \ref{lemma:CME-projection} and Assumption \ref{as:identifiability} lead to the following result.
\begin{theorem} \label{theo:cme-population-misspecified}
Suppose that the assumptions in Lemma \ref{lemma:CME-projection} and Assumption \ref{as:identifiability} hold.
Let $r^* := r^n(\theta^*)$ where $\theta^*$ is any element satisfying \eqref{eq:optimal-parameters}.  
Then \eqref{eq:pop-CME-reg} is equal to
$$
C_{\vartheta \y}( C_{\y \y} + \varepsilon I )^{-1} k_\Rn(\cdot,r^*).
$$
\end{theorem}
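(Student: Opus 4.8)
The plan is to obtain Theorem~\ref{theo:cme-population-misspecified} as a short consequence of Lemma~\ref{lemma:CME-projection}, Assumption~\ref{as:identifiability}, and the chain of identities displayed immediately before the theorem statement. I would proceed in three steps.

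\textbf{Step 1 (reduce to the projection).} Since the hypotheses of Lemma~\ref{lemma:CME-projection} — that $k_\Theta$ is bounded and continuous and that $0 < \beta(X_i) < \infty$ for all $i$ — are among the assumptions of the theorem, I apply it directly to rewrite the population expression \eqref{eq:pop-CME-reg} as $C_{\vartheta\y}(C_{\y\y}+\varepsilon I)^{-1} h^*$, where $h^*$ is the orthogonal projection of the feature vector $k_\Rn(\cdot,Y^n)$ onto the closed subspace $\H_\y$ from \eqref{eq:Hilbert-subspace-support}. As the projection onto a closed subspace of a Hilbert space, $h^*$ is uniquely determined by \eqref{eq:projection}.

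\textbf{Step 2 (identify the projection with a simulator output).} By Assumption~\ref{as:identifiability} there is $\tilde Y^n \in \supp(P_\Rn)$ with $k_\Rn(\cdot,\tilde Y^n) = h^*$; and since $\supp(P_\Rn) = \{ r^n(\theta) : \theta \in \supp(\pi)\}$, this $\tilde Y^n$ has the form $r^n(\theta^*)$ for some $\theta^* \in \supp(\pi)$. I then reproduce the displayed computation preceding the theorem: expanding $\|k_\Rn(\cdot,Y^n) - k_\Rn(\cdot, r^n(\theta))\|_{\H_\Rn}^2$ via the reproducing property reduces minimization over $\theta$ to \emph{maximizing} $k_\Rn(Y^n, r^n(\theta)) = \exp(-\tfrac{1}{2\sigma^2}\sum_i \beta(X_i)(Y_i - r(X_i,\theta))^2)$, which — the exponential being strictly increasing — is equivalent to minimizing $\sum_i \beta(X_i)(Y_i - r(X_i,\theta))^2$ over $\supp(\pi)$. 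Hence $\theta^*$ is a least weighted-squares solution, i.e. $\theta^* \in \Theta^*$ in the sense of \eqref{eq:optimal-parameters}, for any $\theta^* \in \Theta^*$ realizing the projection.

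\textbf{Step 3 (well-posedness of $r^*$ and substitution).} Because $h^*$ is unique (Step 1) and the map $\tilde Y^n \mapsto k_\Rn(\cdot,\tilde Y^n)$ is injective (the weighted Gaussian kernel \eqref{eq:weighted-kernel} is strictly positive definite when all $\beta(X_i) > 0$), the output vector realizing the projection is itself unique; this is precisely the consistency property \eqref{eq:unique-sim-outputs}, so $r^* := r^n(\theta^*)$ is well-defined regardless of which minimizer $\theta^* \in \Theta^*$ is chosen, and $h^* = k_\Rn(\cdot, r^*)$. Substituting this into the expression from Step 1 yields $C_{\vartheta\y}(C_{\y\y}+\varepsilon I)^{-1} k_\Rn(\cdot, r^*)$, which is the claim. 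The genuine content lies entirely in Lemma~\ref{lemma:CME-projection}; given it, the theorem is essentially a substitution, and the only delicate point is Step~2 — ensuring that passing from the abstract RKHS minimizer $h^*$ to a concrete parameter does not lose information, so that the argmin of an RKHS distance really coincides with the argmin of the importance-weighted squared error. This is exactly what Assumption~\ref{as:identifiability} supplies, the remaining manipulation being the elementary kernel expansion already sketched above; the secondary check that non-uniqueness of $\Theta^*$ causes no ambiguity in $r^*$ follows from strict positive definiteness of $k_\Rn$ under $0 < \beta(X_i) < \infty$.
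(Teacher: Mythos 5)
Your proposal is correct and follows essentially the same route as the paper, which derives the theorem by applying Lemma~\ref{lemma:CME-projection}, invoking Assumption~\ref{as:identifiability} to write $h^* = k_\Rn(\cdot, r^n(\theta^*))$, and running the displayed argmin/argmax chain to identify $\theta^*$ with a least weighted-squares minimizer before substituting $k_\Rn(\cdot,r^*)$ for $h^*$. Your Step 3 makes explicit (via strict positive definiteness of the weighted Gaussian kernel, hence injectivity of the feature map) why all elements of $\Theta^*$ yield the same $r^*$, a point the paper handles more briefly through the uniqueness of $h^*$ and the convention \eqref{eq:unique-sim-outputs}; this is a welcome clarification but not a different proof.
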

Theorem \ref{theo:cme-population-misspecified} suggests that the estimator \eqref{eq:est-cov-expression} would behave as if the observed data is the optimal simulation outputs $r^*$ obtained as a best approximation for the given data $Y^n$.
The convergence result presented below shows that this is indeed the case.

To state the result, we define a function $G : \supp(P_\Rn)\times \supp(P_\Rn)\to \mathbb{R}$ as
\begin{eqnarray} \label{eq:G-func}
G(Y_a^n, Y_b^n) &:=& \E [k_\Theta(\vartheta, \vartheta ) | \y = Y_a^n, \y' =Y_b^n  ], \\
&=&  \E [k_\Theta(\vartheta, \vartheta ) | r^n(\vartheta) = Y_a^n, r^n(\vartheta') =Y_b^n  ], \nonumber
\end{eqnarray}
where $(\vartheta', \y')$ is an independent copy of $(\vartheta, \y)$.

The following result shows that \eqref{eq:est-cov-expression} (or \eqref{eq:kernel_postmean}) is a consistent estimator of
the kernel mean $ \mu_{\Theta | r^* }$ \eqref{eq:post-kmean-def} of the posterior $P_{\pi}(\theta|r^*)$.
It is obtained by extending the result of \citet[Theorem 1.3.2]{Fuk15} to the misspecified setting where $Y^n \not\in \supp(P_\Rn)$ by using Theorem\ref{theo:cme-population-misspecified}.
The assumptions made are essentially the same those in \citet[Theorem 1.3.2]{Fuk15}.
Below $\mathrm{Range}(C_{\y \y} \otimes C_{\y \y})$ denotes the range of the tensor-product operator $C_{\y \y} \otimes C_{\y \y}$ on the tensor-product RKHS $\H_\Rn \otimes \H_\Rn $ (see Appendix for details).

\begin{theorem}  \label{theo:convergence}
Suppose that the assumptions in Lemma \ref{lemma:CME-projection} and Assumption \ref{as:identifiability} hold.
Assume that the eigenvalues $\lambda_1 \geq \lambda_2 \geq \cdots \geq 0$ of $C_{\y \y}$ satisfy $\lambda_i \leq \beta i^{-b}$ for all $i \in \mathbb{N}$  for some constants $\beta > 0$ and $b > 1$, and that the function $G$ in \eqref{eq:G-func} satisfies $G \in \mathrm{Range}(C_{\y \y} \otimes C_{\y \y})$.
Let $C > 0$ be any fixed constant, and set the regularization constant  $\varepsilon := \varepsilon_m := C m^{- \frac{b}{1+4b}}$  of $\hat{\mu}_{\Theta | r^* }$ in \eqref{eq:est-cov-expression} (or \eqref{eq:kernel_postmean}).
Then we have 
$$
\left\| \hat{\mu}_{\Theta | r^* } - \mu_{\Theta | r^* } \right\|_{\H_\Theta}  = O_p\left(  m^{- \frac{b}{1+4b}} \right) \ ( m \to \infty).
$$
\end{theorem}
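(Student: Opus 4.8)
The plan is to reduce the misspecified problem to the well-specified conditional-mean-embedding problem handled by \citet[Theorem 1.3.2]{Fuk15}, and then invoke (or rerun the proof of) that theorem at the conditioning point $r^*$. The reduction has two parts: an \emph{empirical} counterpart of Lemma \ref{lemma:CME-projection}, showing that the estimator \eqref{eq:est-cov-expression} built from the observed data $Y^n$ is literally equal to the one built from $r^*$; and Theorem \ref{theo:cme-population-misspecified}, which identifies the corresponding population limit as $\mu_{\Theta|r^*}$.

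For the empirical reduction, write $k_\Rn(\cdot, Y^n) = h^* + h_\perp$ with $h^*$ the projection \eqref{eq:projection} and $h_\perp \perp \H_\y$; by Assumption \ref{as:identifiability} and the discussion preceding Theorem \ref{theo:cme-population-misspecified}, $h^* = k_\Rn(\cdot, r^*)$. Each pseudo-observation satisfies $\bar{Y}_j^n = r^n(\bar{\theta}_j) \in \supp(P_\Rn)$ almost surely (as $\bar{\theta}_j \sim \pi$), hence $k_\Rn(\cdot, \bar{Y}_j^n) \in \H_\y$ and $h_\perp(\bar{Y}_j^n) = \langle h_\perp, k_\Rn(\cdot, \bar{Y}_j^n)\rangle_{\H_\Rn} = 0$ for all $j$. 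Consequently $\hat{C}_{\y \y} h_\perp = 0$ and $\hat{C}_{\vartheta \y} h_\perp = 0$, so $(\hat{C}_{\y \y} + \varepsilon I)^{-1} h_\perp = \varepsilon^{-1} h_\perp$ and $\hat{C}_{\vartheta \y}(\hat{C}_{\y \y} + \varepsilon I)^{-1} h_\perp = 0$. Therefore
\[
\hat{\mu}_{\Theta | r^*} = \hat{C}_{\vartheta \y}(\hat{C}_{\y \y} + \varepsilon I)^{-1} k_\Rn(\cdot, Y^n) = \hat{C}_{\vartheta \y}(\hat{C}_{\y \y} + \varepsilon I)^{-1} k_\Rn(\cdot, r^*),
\]
which is exactly the Kernel ABC / conditional-mean-embedding estimator at the fixed conditioning point $r^* \in \supp(P_\Rn)$, based on the $m$ i.i.d.\ copies $(\bar{\theta}_j, \bar{Y}_j^n)_{j=1}^m$ of $(\vartheta, \y)$.

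With this in hand, I would finish by applying \citet[Theorem 1.3.2]{Fuk15} with conditioning point $r^*$, working conditionally on $Y^n$ (under which $r^*$ is a fixed element of $\supp(P_\Rn)$ and the $\bar{\theta}_j$ remain i.i.d.). Its proof is the usual bias--variance split: the regularization bias $\| C_{\vartheta \y}(C_{\y \y} + \varepsilon I)^{-1} k_\Rn(\cdot, r^*) - \mu_{\Theta|r^*} \|_{\H_\Theta}$ is $O(\varepsilon)$ by the source condition $G \in \mathrm{Range}(C_{\y \y} \otimes C_{\y \y})$, while the estimation error $\| (\hat{C}_{\vartheta \y}(\hat{C}_{\y \y} + \varepsilon I)^{-1} - C_{\vartheta \y}(C_{\y \y} + \varepsilon I)^{-1}) k_\Rn(\cdot, r^*) \|_{\H_\Theta}$ is $O_p( m^{-1/2} \varepsilon^{-1 - 1/(2b)} )$, using the $O_p(m^{-1/2})$ concentration of $\hat{C}_{\vartheta \y}$ and $\hat{C}_{\y \y}$ together with the effective-dimension bound $\mathrm{tr}((C_{\y \y} + \varepsilon I)^{-1} C_{\y \y}) = O(\varepsilon^{-1/b})$ implied by $\lambda_i \leq \beta i^{-b}$; balancing these at $\varepsilon_m = C m^{-b/(1+4b)}$ gives the claimed rate. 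Theorem \ref{theo:cme-population-misspecified} certifies that the population object the estimator nominally targets, namely \eqref{eq:pop-CME-reg} evaluated at $Y^n$, coincides with the one at $r^*$, so the reduction loses nothing.

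I expect the main obstacle to be the bookkeeping around $r^*$ being itself a function of the data $Y^n$: Fukumizu's bound must be transferred from a statement conditional on $Y^n$ to an unconditional $O_p$ statement, which requires either a version of Theorem 1.3.2 that is uniform over conditioning points in $\supp(P_\Rn)$, or an argument that the relevant constants (operator norms, the source-condition constant, the behaviour of $G$ near $r^*$) are bounded over that set under the stated assumptions. A secondary point, already essentially dispatched in the text preceding Theorem \ref{theo:cme-population-misspecified}, is verifying that Assumption \ref{as:identifiability} does force $h^* = k_\Rn(\cdot, r^*)$ with $r^* \in \supp(P_\Rn)$, and that the eigenvalue-decay and source conditions are meaningful for the importance-weighted kernel \eqref{eq:weighted-kernel}.
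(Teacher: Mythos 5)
Your proposal is correct and, at its core, follows the same strategy as the paper: reduce the misspecified problem to the well-specified conditional-mean-embedding problem at the conditioning point $r^*$ via Assumption \ref{as:identifiability} (so that $h^* = k_\Rn(\cdot,r^*)$), then run the bias--variance analysis in the spirit of \citet[Theorem 1.3.2]{Fuk15}, with the source condition $G \in \mathrm{Range}(C_{\y \y}\otimes C_{\y \y})$ giving an $O(\varepsilon_m)$ bias, the eigenvalue decay giving an $O_p(\varepsilon_m^{-1-1/(2b)}m^{-1/2})$ estimation error, and the two balanced at $\varepsilon_m = Cm^{-b/(1+4b)}$. The genuinely different ingredient is your exact empirical identity $\hat{C}_{\vartheta \y}(\hat{C}_{\y \y}+\varepsilon I)^{-1}k_\Rn(\cdot,Y^n) = \hat{C}_{\vartheta \y}(\hat{C}_{\y \y}+\varepsilon I)^{-1}k_\Rn(\cdot,r^*)$, which is valid almost surely over the draws $\bar{\theta}_j\sim\pi$ (since $\bar{Y}^n_j\in\supp(P_\Rn)$ gives $h_\perp(\bar{Y}^n_j)=0$, hence $\hat{C}_{\y \y}h_\perp=0$ and $\hat{C}_{\vartheta \y}h_\perp=0$) and does not appear in the paper: there the transfer from $Y^n$ to $r^*$ happens only at the population level (Lemma \ref{lemma:CME-projection} and Theorem \ref{theo:cme-population-misspecified}) inside a triangle-inequality split, and the empirical term is controlled at $Y^n$ itself via the operator-norm bound $\|\hat{C}_{\vartheta \y}(\hat{C}_{\y \y}+\varepsilon_m I)^{-1}-C_{\vartheta \y}(C_{\y \y}+\varepsilon_m I)^{-1}\|\,\|k_\Rn(\cdot,Y^n)\|_{\H_\Rn}$, using Baker's factorization and \citet[Lemma 1.5.1]{Fuk15} together with the effective-dimension bound. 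Your identity buys a slightly cleaner reduction (the analysis is literally the well-specified one at $r^*$), while the paper's route needs no empirical counterpart of the lemma; both end with the same two terms and the same balancing. Finally, the ``main obstacle'' you flag is not actually one: $n$ and $Y^n$ are held fixed and the asymptotics are in $m$ alone, so $r^*$ is a deterministic element of $\supp(P_\Rn)$ throughout, no uniformity over conditioning points is needed, and the constants entering the bound (e.g.\ $\|\xi\|_{\H_\Rn\otimes\H_\Rn}$ with $G=(C_{\y \y}\otimes C_{\y \y})\xi$, and $\|k_\Rn(\cdot,r^*)\|_{\H_\Rn}\le 1$ for the bounded weighted kernel) cause no trouble --- this is exactly how the paper treats it.
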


\section{Experiments} \label{sec:experiments}

We first explain the setting common for all the experiments. 
In each experiment, we consider both regression problems with and without covariate shift, to see whether the proposed method can deal with covariate shift. 
In the latter case, which we call ``ordinary regression,'' we set the importance weights to be constant, $\beta(X_i) = 1$ ($i=1,...,n$).
The noise process $e(x)$ in \eqref{eq:data-gen-process} is independent Gaussian $\varepsilon 
\sim N(0,\sigma_{\rm noise}^2)$.
We write $N(a,b)$ for the normal distribution with mean $a$ and variance $b$; the multivariate version is denoted similarly.

For the proposed method, we used a Gaussian kernel $k_\Theta(\theta,\theta') = \exp(- \| \theta - \theta' \|^2 / 2 \sigma^2_\Theta)$ for the parameter space, where $\sigma^2_\Theta > 0$ is a constant.
We set the constants $\sigma^2, \sigma_\Theta^2  > 0$ in the kernels $k_\Rn$ and $k_\Theta$ by the median heuristic \cite[e.g.][]{median_heu_2018} using the simulated pairs $(\bar{\theta}_j, \bar{Y}^n_j)_{j=1}^m$.

For comparison, we used Markov Chain Monte Carlo (MCMC) for posterior sampling, more specifically the Metropolis-Hastings (MH) algorithm. 
For this competitor, we assume that the noise process $e(x)$ in \eqref{eq:data-gen-process} is known, so that the likelihood function is available in MCMC (which is of the form $\exp( - \sum_{i=1}^n \beta(X_i) \left( Y_i - r(X_i,\theta) \right)^2 / 2\sigma_{\rm noise}^2)$ up to constant).  
In this sense, we give an unfair advantage for MH over the proposed method, as the latter does not assume the knowledge of the noise process, which is usually not available in practice.

For evaluation, we compute Root Mean Square Error (RMSE) in prediction for each method (and for a different number of simulations, $m$) as follows. 
Test input locations $\tilde{X}_1, \dots, \tilde{X}_n$ are generated from $q_0(x)$ in the case of ordinary regression, and from $q_1(x)$ in the covaraite shift setting. 
After sampling parameters $\check{\theta}_1, \dots, \check{\theta}_m$ with the method for evaluation, the RMSE is computed as   $( \frac{1}{n} \sum_{i=1}^n ( R(\tilde{X}_i) - \frac{1}{m} \sum_{j=1}^m r(\tilde{X}_i, \check{\theta}_j ) ) ^2 )^{1/2}$.

\subsection{Synthetic Experiments}
\label{subsec:SyntheticExperiment}


\label{subsec:SettingOfExperiment}

We consider the problem setting of the benchmark experiment in ~\cite{Shimodaira00}.

{\bf Setting.} 
The input space is $\X = \R$, and the data generating process \eqref{eq:data-gen-process} is given by $R(x) = -x+x^3$ and $e(x) = \epsilon$ with $\epsilon \sim N(0,2)$ being an independent noise.
The simulation model is defined by $r(x, \theta) = \theta_0 + \theta_1 x$, where $\theta = (\theta_1,\theta_2)^\top \in \Theta =\R^d$. 
For demonstration, we treat this model as intractable, i.e., we assume that only evaluation of  function values $r(x,\theta)$ is possible once $x$ and $\theta$ are given.
The input densities $q_0(x)$ and $q_1(x)$ for for training and prediction are those of $N(0.5,0.5)$ and $N(0, 0.3)$, respectively.
We define the prior as multivariate Gaussian $\pi = N({\bf 0},5 I_2)$, where $I_2 \in \R^{2\times 2}$ is the identity.  
We set the size of training data $(X_i,Y_i)_{i=1}^n$ as $n = 100$.



\begin{figure}[t]
  \centering
  \includegraphics[width=\linewidth]{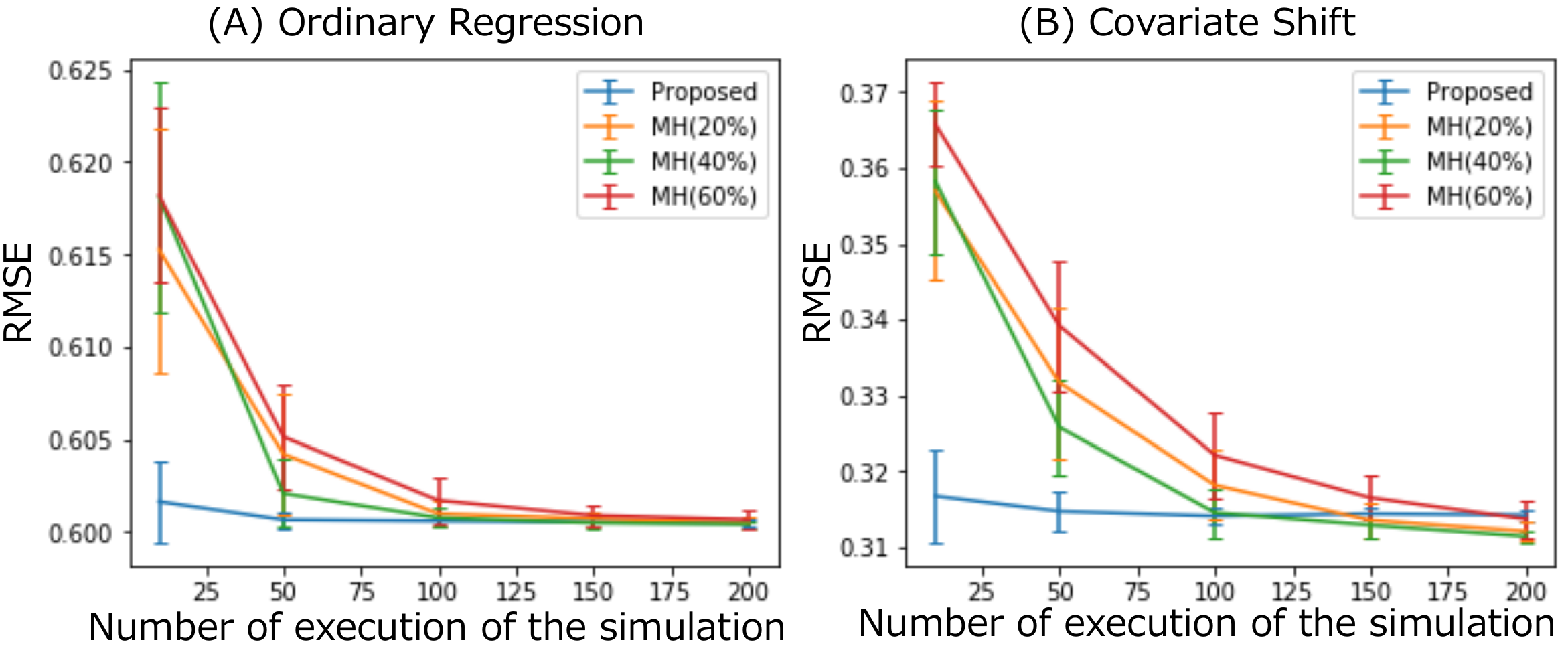}
  \caption{
  RMSEs for (A) ordinary and (B) covariate shift cases, as a function of the number $m$ of simulations, given by the proposed method (blue) and the MH algorithm with different acceptance ratios about 20\% (orange), 40\% (red), and 60\% (green).}
  \label{fig:nsim_vs_rmse}
\end{figure}

{\bf Results.}
Figure~\ref{fig:nsim_vs_rmse} shows RMSEs for (A) ordinary regression and (B) covariate shift as a function of the number $m$ of simulations, with the means and standard deviations calculated from 30 independent trials.
For the proposed method, we set the regularization constant to be $\varepsilon=1.0$.
We set the proposal distribution of MH to be $N({\bf 0}, \sigma_{p}^2 I_2)$ with $\sigma_{p}$ being 0.08,  0.06, and 0.03, which were tuned so that the acceptance ratios become about 20\%, 40\%, and 60\% respectively.
In the horizontal axis, the number of simulations for MH is the number of all MCMC steps (which all require running the simulator) including burn-in and rejected executions. 
For MH, we used the first 10\% MCMC steps for burn-in, and excluded them for predictions.
The results show that the proposed method is more efficient than MH, in the sense that it gives better predictions than MH based on a small number of simulations.
This is a promising property, since real-world simulators are often computationally expensive, as is the case for the experiment in the next section.


%

\subsection{Experiments on Production Simulator}
\label{subsec:ExperimentProductSimulator}


We performed experiments on the manufacturing process simulator mentioned in Sec.~\ref{sec:Introduction} (Fig.~\ref{fig:SimpleAssemblyModel}), and a more sophisticated production simulator with 12 parameters.
We only describe the former here, and report the latter in the Appendix due to the space limitation. 

{\bf Setting.}
 We used a simulator constructed with {\em WITNESS}, a popular software package for production simulation (\url{https://www.lanner.com/en-us/}). 
 We refer to Sec.~\ref{sec:Introduction} for an explanation of the simulator.
 This simulator $r(x,\theta)$ has 4 parameters $\theta \in \Theta  \subset \R^4$.
 The input space for regression is $\X  = (0,\infty)$.  

The data generating process \eqref{eq:data-gen-process} is defined as $R(x) = r(x, \theta^{(0)})$ for $x < 110$ and $R(x) = r(x, \theta^{(1)})$ for $x \geq 110$, where $\theta^{(0)} := ( 2, 0.5, 5, 1 )^\top$ and $\theta^{(1)} := (3.5, 0.5, 7, 1)^\top$; the noise model is an independent noise $e(x)= \epsilon \sim N(0, 30)$.
The input densities are defined as $q_0(x) = N(100, 10)$ (training) and $q_1(x) = N(120, 10)$ (prediction). 
We constructed this model so that the two regions $x < 110$ and $x \geq 110$ correspond to those for training and prediction, respectively, with $\theta^{(0)}$ and $\theta^{(1)}$ being the ``true'' parameters in the respective regions. 
We defined the prior $\pi(\theta)$ as the uniform distribution over $\Theta := [0,5] \times [0,2] \times [0,10] \times [0,2] \subset \R^4$. 
The size of training data $(X_i, Y_i)_{i=1}^n$ (which are described in  Fig.~\ref{fig:SimpleAssemblyModel}~(B)(C) as red points) is $n=50$.

\begin{figure}[t]
  \centering
  \includegraphics[width=\linewidth]{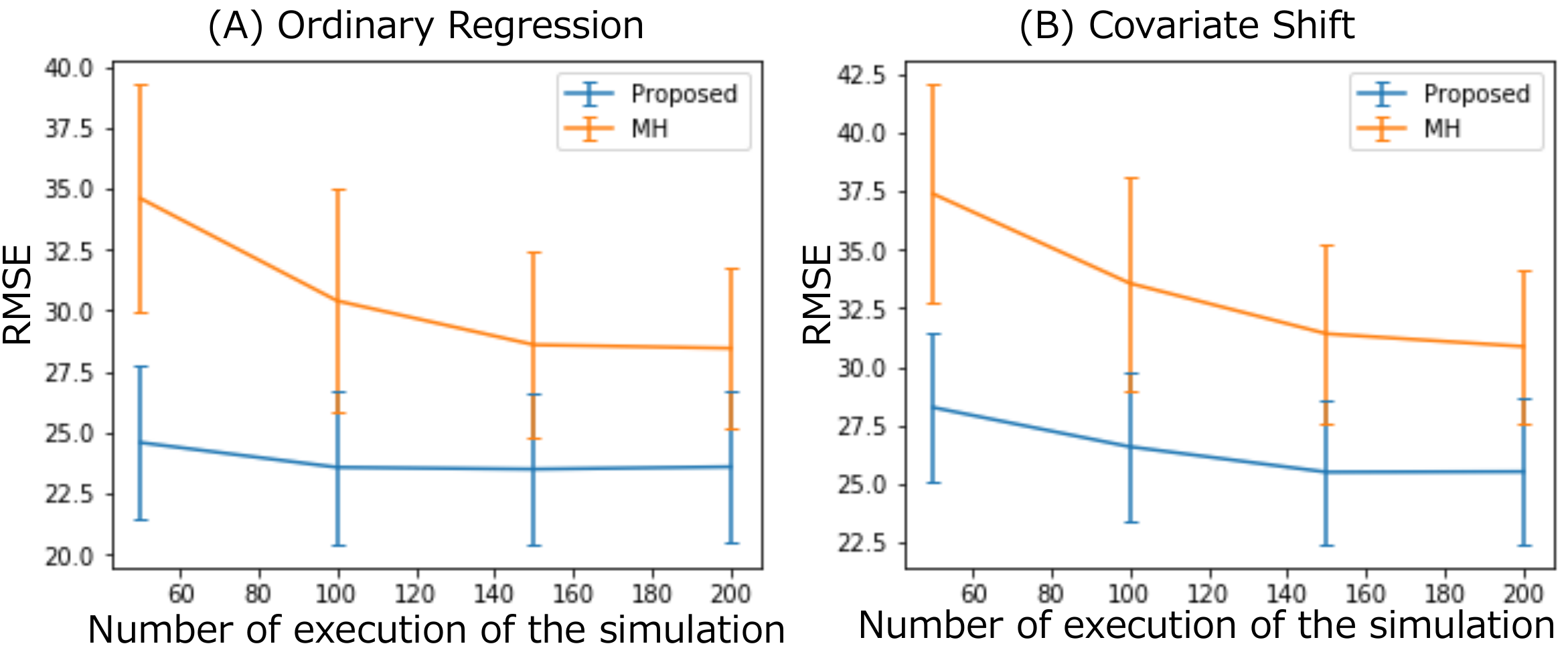}
  \caption{RMSEs  in the (A) ordinary and (B) covariate shift settings, as a function of the number $m$ of simulations, for the proposed method (blue) and MH (orange).}
  \label{fig:nsim_vs_rmse_simpleprod}
\end{figure}

{\bf Results.}
\label{subsubsec:ResultOfSimpleExperiment}
Figure~\ref{fig:nsim_vs_rmse_simpleprod}  shows the averages and standard deviations of RMSEs for the proposed method and MH of 10 independent trials, changing the number $m$ of simulations.
We set the regularization constant of the proposed method as $\varepsilon=0.01$, and the proposal distribution of MH as $N({\bf 0}, 0.03^2 I_4)$, which was tuned to make the acceptance about 40\%.\footnote{In this experiment one simulation is computationally expensive and takes about 2 seconds with the authors' PC, so we decided to only use this acceptance rate, given that the it performed the best in the previous experiment.}
The results show that the proposed method is more accurate than MH with a small number of simulations, even though the latter used the full knowledge of the data generating process \eqref{eq:data-gen-process}.

\begin{figure}[t]
  \centering
  \includegraphics[width=8.3cm]{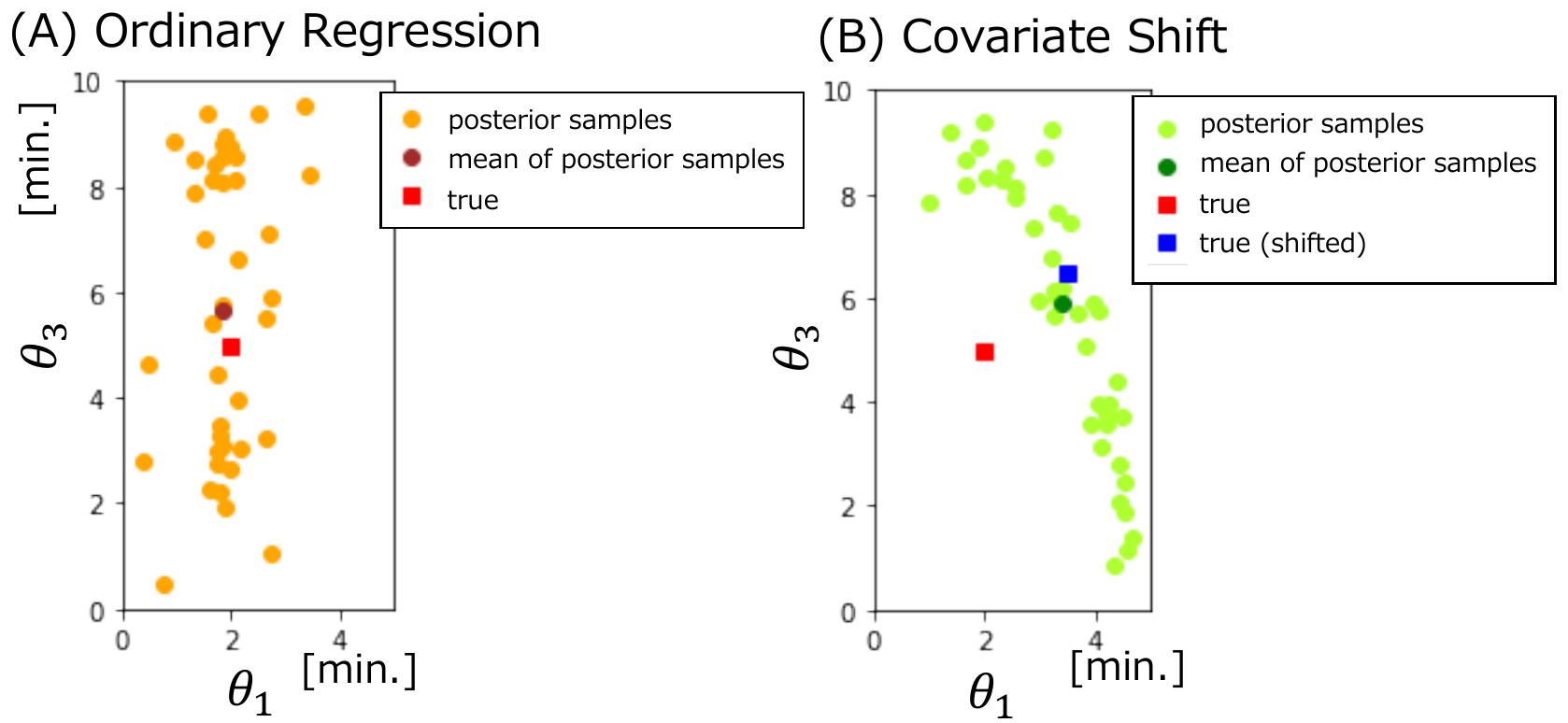}
  \caption{
  Parameters $\check{\theta}_1,\dots,\check{\theta}_m$ generated from the proposed method, in the subspace of coordinates of $\theta_1$ and $\theta_3$.
  (A): Ordinary regression: the generated parameters (orange), the mean of them (brown), and the ``true'' parameter $\theta^{(0)}$ for the training region $x  < 110$ (red).
  (B) Covariate shift: the generated parameters (light green), the mean of them (green),  and the ``true'' parameter $\theta^{(1)}$ for the prediction region $x \geq 110$ (blue, ``true shifted'').
  }
  \label{fig:result_SimpleAssemblyModel}
\end{figure}

Fig.~\ref{fig:result_SimpleAssemblyModel}~(A) and (B) describe parameters $\check{\theta}_1, \dots, \check{\theta}_m$ generated in one run of the proposed method in the ordinary and covariate shift settings, respectively; the corresponding predictive outputs are shown in Fig.~\ref{fig:SimpleAssemblyModel}~(B) and (C).
In both settings, the estimated posterior mean is located near the ``true'' parameter of each scenario.
Fig.~\ref{fig:result_SimpleAssemblyModel}~(A) and (B) also demonstrate how our method might be useful for sensitivity analysis.
Our method generates parameters $\check{\theta}_1, \dots, \check{\theta}_m$ so as to approximate the posterior $P_{\pi}(\theta|r^*)$, where $r^*$ is ``optimal'' simulation outputs. 
Therefore, the more variation in the coordinate $\theta_1$ indicates that the value of $\theta_1$ is not very important to obtain optimal simulation outputs. 
But a comparison between (A) and (B) indicates that, under covariate shift, there should be small correlation between $\theta_1$ and $\theta_3$ to obtain optimal simulation outputs.

\subsection*{Acknowledgements}
We would to thank the reviewers and the area chair for their constructive feedback.

\bibliographystyle{apalike}
\bibliography{Bibfile}


\newpage 

\onecolumn

\appendix 
\begin{center}
    {\bf \Large Supplementary Materials}  
\end{center}
\begin{center}
{\bf  \large Simulator Calibration under Covariate Shift with Kernels}    
\end{center}

\section{Proofs}

\subsection{Proof of Lemma \ref{lemma:CME-projection}}

First we note that from the assumption $0 < \beta(X_i) < \infty$ for all $i=1,\dots,n$, the importance-weighted kernel \eqref{eq:weighted-kernel} is continuous on $\Rn$.
Therefore \citet[Lemma 4.33]{SteChr2008} implies that the RKHS $\H_\Rn$ of $k_\Rn$ is separable.

To prove Lemma \ref{lemma:CME-projection}, we need the following result.
\begin{lemma} \label{lemma:null-space}
Suppose that the assumptions in Lemma \ref{lemma:CME-projection} hold.
Let $(\phi_i)_{i=1}^\infty \subset \H_\Rn$ be  the eigenfunctions of the covariance operator $C_{\y \y}$ associated with positive eigenvalues, and let  $(\tilde{\phi}_j)_{j=1}^\infty \subset \H_\Rn$ be an ONB of the null space of $C_{\y \y}$.
Then $ \tilde{\phi}_j(\tilde{Y}^n) = 0$ holds for $P_\Rn$-almost every $\tilde{Y}^n \in \Rn$.
\end{lemma}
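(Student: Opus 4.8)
The plan is to reduce the statement to the elementary fact that a nonnegative random variable with zero expectation vanishes almost surely. First I would record the bilinear identity
\[
\langle C_{\y \y} f, g \rangle_{\H_\Rn} = \E\left[ f(\y) g(\y) \right], \qquad f, g \in \H_\Rn,
\]
which follows by writing $C_{\y \y} f = \E[k_\Rn(\cdot,\y) f(\y)]$ as a Bochner integral, taking the inner product with $g$ inside the expectation, and applying the reproducing property $\langle k_\Rn(\cdot,\y), g\rangle_{\H_\Rn} = g(\y)$. The Bochner integral is well defined and this interchange is legitimate because $k_\Rn$ is bounded: since $0 < \beta(X_i) < \infty$ for all $i$, the weighted kernel in \eqref{eq:weighted-kernel} satisfies $0 < k_\Rn \le 1$, hence $\|k_\Rn(\cdot,\y)\|_{\H_\Rn} = \sqrt{k_\Rn(\y,\y)} = 1$ and every $h \in \H_\Rn$ obeys $|h(x)| \le \|h\|_{\H_\Rn}$, so that $\E[|f(\y)g(\y)|] < \infty$.

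Next I would use separability of $\H_\Rn$ — already established at the start of the proof of Lemma~\ref{lemma:CME-projection} via \citet[Lemma 4.33]{SteChr2008} — so that the closed subspace $\mathrm{Null}(C_{\y \y})$ admits a countable orthonormal basis $(\tilde{\phi}_j)_{j=1}^\infty$, as in the statement. For each fixed $j$, since $\tilde{\phi}_j \in \mathrm{Null}(C_{\y \y})$ we have $C_{\y \y} \tilde{\phi}_j = 0$, whence by the identity above
\[
0 = \langle C_{\y \y} \tilde{\phi}_j, \tilde{\phi}_j \rangle_{\H_\Rn} = \E\left[ \tilde{\phi}_j(\y)^2 \right].
\]
As $\tilde{\phi}_j(\y)^2 \ge 0$, this forces $\tilde{\phi}_j(\y) = 0$ almost surely; recalling that $\y = r^n(\vartheta)$ has law $P_\Rn = r^n\pi$, this is exactly the claim that $\tilde{\phi}_j(\tilde{Y}^n) = 0$ for $P_\Rn$-almost every $\tilde{Y}^n$.

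I expect the only real obstacle to be bookkeeping rather than substance: one must check that $C_{\y \y}$ is genuinely a well-defined bounded (indeed self-adjoint, positive, trace-class) operator so that ``null space'' and ``eigenfunctions associated with positive eigenvalues'' are meaningful, and that the measure-theoretic ``almost every'' is taken consistently with respect to the pushforward $P_\Rn$. If an exceptional set uniform in $j$ were desired, it would follow by a countable union over $j$, again using separability; but the per-$j$ conclusion required here needs nothing beyond the two displays above.
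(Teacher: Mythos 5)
Your proof is correct, and it takes a genuinely different (and lighter) route than the paper's. The paper regards $C_{\y\y}\tilde{\phi}_j = \int k_\Rn(\cdot,\tilde{Y}^n)\,\tilde{\phi}_j(\tilde{Y}^n)\,dP_\Rn(\tilde{Y}^n)$ as the kernel mean embedding of the finite signed measure $d\nu = \tilde{\phi}_j\,dP_\Rn$ (finiteness uses boundedness of $\tilde{\phi}_j$), and then invokes the $c_0$-universality of the Gaussian-type weighted kernel together with Proposition 2 of Sriperumbudur et al.\ to conclude that $\nu$ is the zero measure, hence $\tilde{\phi}_j = 0$ holds $P_\Rn$-almost everywhere. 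You instead use only the quadratic-form identity $\langle C_{\y\y} f, g\rangle_{\H_\Rn} = \E[f(\y)g(\y)]$ (which the paper itself records later, in the proof of Theorem 2), applied with $f = g = \tilde{\phi}_j$ to get $0 = \E[\tilde{\phi}_j(\y)^2]$ and hence $\tilde{\phi}_j(\y)=0$ almost surely. Your argument is more elementary and more general: it needs only that the kernel is bounded and measurable so that the Bochner integral defining $C_{\y\y}$ exists and commutes with the inner product, whereas the paper's argument relies on the specific $c_0$-universal structure of the kernel and on the injectivity of the embedding of signed measures — machinery that is not actually needed for this lemma. Your bookkeeping points (boundedness $k_\Rn \le 1$ from $0<\beta(X_i)<\infty$, separability from continuity of the kernel, and the per-$j$ versus uniform-in-$j$ null set) are all handled consistently with the paper's setup.
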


\begin{proof}

By definition of $\tilde{\phi}_j$, its holds that
$$
0 = C_{\y \y} \tilde{\phi}_j = \int k_\Rn(\cdot, \tilde{Y}^n) \tilde{\phi}_j(\tilde{Y}^n) dP_\Rn(\tilde{Y}^n) =:   \int k_\Rn(\cdot, \tilde{Y}^n) d\nu(\tilde{Y}^n),
$$
where the measure $\nu$ is defined by $d\nu(\tilde{Y}^n) := \tilde{\phi}_j(\tilde{Y}^n) dP_\Rn(\tilde{Y}^n)$.
Since the kernel $k_\Rn$ is bounded on $\Rn$, $\H_\Rn$ consists of bounded functions, and thus $\tilde{\phi}_j \in \H_\Rn$ is bounded.
Therefore $\nu$  a finite measure.  
But since $k_\Rn$ is a Gaussian kernel (see \eqref{eq:weighted-kernel}), it is $c_0$-universal, and so \citet[Proposition 2]{SriFukLan11} and the integral being zero imply that $\nu$ is the zero measure. 
Thus for $\nu$ to be the zero measure, $ \tilde{\phi}_j(\tilde{Y}^n) = 0$  should hold for $P_\Rn$-almost every $\tilde{Y}^n$, which concludes the proof.

\end{proof}

We now prove Lemma \ref{lemma:CME-projection}.
\begin{proof}
Let $(\phi_i)_{i=1}^\infty \subset \H_\Rn$ be  the eigenfunctions of the covariance operator $C_{\y \y}$ associated with positive eigenvalues $\lambda_1 \geq \lambda_2 \geq \dots > 0$, and let  $(\tilde{\phi}_j)_{j=1}^\infty \subset \H_\Rn$ be an ONB of the null space of $C_{\y \y}$.
To prove the assertion, we first show that (a) $\left< \phi_i, h_\perp \right> = 0$ for every $\phi_i$, and that (b) $C_{\vartheta \y} \tilde{\phi}_j = 0$ for every $\tilde{\phi}_j$.   

(a) By definition of $\phi_i$, it can be written as
$$
 \phi_i = \lambda_i^{-1} C_{\y \y} \phi_i =  \lambda_i^{-1}\int k_\Rn(\cdot, \tilde{Y}^n) \phi_i(\tilde{Y}^n) d P_\Rn(\tilde{Y}^n).
$$
Therefore, 
\begin{eqnarray*}
\left< \phi_i, h_\perp\right>_{\H_\Rn} 
&=& \left< \lambda_i^{-1}\int k_\Rn(\cdot, \tilde{Y}^n) \phi_i(\tilde{Y}^n) d P_\Rn(\tilde{Y}^n), h_\perp\right>_{\H_\Rn} \\
&=& \lambda_i^{-1}\int \left<  k_\Rn(\cdot, \tilde{Y}^n) , h_\perp\right>_{\H_\Rn}  \phi_i(\tilde{Y}^n) d P_\Rn(\tilde{Y}^n) = 0,
\end{eqnarray*}
where the last identity follows from $ \left<  k_\Rn(\cdot, \tilde{Y}^n) , h_\perp\right>_{\H_\Rn} = 0$ for $\tilde{Y}^n \in \supp(P_\Rn)$, which follows from the definition of $h_\perp$. 

(b) We have
\begin{eqnarray*}
C_{\vartheta \y} \tilde{\phi}_j &=& \int k_\Theta(\cdot, \theta) \tilde{\phi}_j (\tilde{Y}^n) dP_{\Theta \Rn} (\theta, \tilde{Y}^n) \\
&=&  \int  \left( \int k_\Theta(\cdot, \theta) dP_\pi(\theta|\tilde{Y}^n)\right) \tilde{\phi}_j (\tilde{Y}^n) dP_{\Rn} (\tilde{Y}^n) = 0,
\end{eqnarray*}
 where the last identity follows from Lemma \ref{lemma:null-space}.

We now prove the assertion. 
By using (a) and (b), we obtain
\begin{eqnarray*} 
 && C_{\vartheta \y}  (C_{\y \y} + \varepsilon I)^{-1} k_\Rn(\cdot, Y^n) \\
 &=& C_{\vartheta \y}  (C_{\y \y} + \varepsilon I)^{-1} (h^* + h_\perp) \\
 &=& C_{\vartheta \y}  \sum_{i  = 1}^\infty ( \lambda_i + \varepsilon )^{-1} \left< h^*, \phi_i \right>_{\H_\Rn} \phi_i  +  C_{\vartheta \y} \sum_{j = 1}^\infty \varepsilon^{-1} \left< h^* + h_\perp, \tilde{\phi}_j \right>_{\H_\Rn} \tilde{\phi}_j \\
 &=& C_{\vartheta \y} \sum_{i =1}^\infty ( \lambda_i + \varepsilon )^{-1} \left< h^*, \phi_i \right>_{\H_\Rn} \phi_i  \\
  &=& C_{\vartheta \y} \sum_{i =1}^\infty ( \lambda_i + \varepsilon )^{-1} \left< h^*, \phi_i \right>_{\H_\Rn} \phi_i   +  C_{\vartheta \y} \sum_{j = 1}^\infty \varepsilon^{-1} \left< h^* , \tilde{\phi}_j \right>_{\H_\Rn} \tilde{\phi}_j   \\
&=&  C_{\vartheta \y}( C_{\y \y} + \varepsilon I )^{-1} h^*,
\end{eqnarray*}
which completes the proof.

\end{proof}

\subsection{Proof of Theorem \ref{theo:convergence}}
Theorem \ref{theo:convergence} can be easily proven by combining the proof idea of \citet[Theorem 1.3.2]{Fuk15} and Theorem \ref{theo:cme-population-misspecified}, but for completeness we present the proof.

Before presenting, we introduce some notation and definitions. 
Below $\| A \| $ for an operator $A$ denotes the operator norm.
$\H_\Rn \otimes \H_\Rn$ denotes the tensor-product RKHS of $\H_\Rn$ and $\H_\Rn$, which is the RKHS of the product kernel $k_{\Rn \times \Rn}: \Rn \times \Rn \to \R$  defined by $k_{\Rn \times \Rn}( (Y_a^n, \tilde{Y}_a^n), (Y_b^n, \tilde{Y}_b^n) ) = k_\Rn((Y_a^n, Y_b^n)) k_\Rn((\tilde{Y}_a^n, \tilde{Y}_b^n))$.
$C_{\y \y} \otimes C_{\y \y} : \H_\Rn \otimes \H_\Rn \to \H_\Rn \otimes \H_\Rn$ is the covariance operator defined by
$$
C_{\y \y} \otimes C_{\y \y} F := \E[ k_{  \Rn \times \Rn} ( \cdot , (\y, \y' ) ) F(\y, \y') ], \quad F \in \H_\Rn \otimes \H_\Rn,
$$
where $\y'$ is an independent copy of the random variable $\y$.

Note that the covariance operator $C_{\vartheta \y}$ satisfies $\left< C_{\vartheta \y} f, g \right>_{\H_\Theta} = \E[f(\y)g(\vartheta)]$ for any $f \in \H_\Rn$ and $g \in \H_\Theta$.
Similarly, $C_{\y \y}$ satisfies $\left< C_{\y \y} f, h\right>_{\H_\Rn} = \E[f(\y) h(\y)]$ for any $f, h \in \H_\Rn$, and $C_{\y \y} \otimes C_{\y \y}$ satisfies $\left< C_{\y \y} F_a, F_b\right>_{\H_\Rn \otimes \H_\Rn} = \E[ F_a(\y, \y') F_b(\y, \y') ]$ for any $F_a, F_b \in \H_\Rn \otimes \H_\Rn$.

\begin{proof}
By the triangle inequality, 
\begin{eqnarray} 
&& \left\|  \hat{C}_{\vartheta \y} ( \hat{C}_{\y \y} + \varepsilon_m  I)^{-1} k_\Rn(\cdot,Y^n) - \mu_{\Theta | r^* } \right\|_{\H_\Theta} \nonumber  \\
&\leq & \left\|  \hat{C}_{\vartheta \y} ( \hat{C}_{\y \y} + \varepsilon_m  I)^{-1} k_\Rn(\cdot,Y^n) - C_{\vartheta \y} ( C_{\y \y} + \varepsilon_m  I)^{-1} k_\Rn(\cdot,Y^n) \right\|_{\H_\Theta} \nonumber \\
&&+   \left\|  C_{\vartheta \y} ( C_{\y \y} + \varepsilon_m  I)^{-1} k_\Rn(\cdot,Y^n) - \mu_{\Theta | r^* } \right\|_{\H_\Theta} \nonumber \\
&\leq & \left\|  \hat{C}_{\vartheta \y} ( \hat{C}_{\y \y} + \varepsilon_m  I)^{-1}  - C_{\vartheta \y} ( C_{\y \y} + \varepsilon_m  I)^{-1} \right\| \left\|  k_\Rn(\cdot,Y^n) \right\|_{\H_\Theta} \label{eq:upper-first} \\
&&+   \left\|  C_{\vartheta \y} ( C_{\y \y} + \varepsilon_m  I)^{-1} k_\Rn(\cdot,r^*) - \mu_{\Theta | r^* } \right\|_{\H_\Theta} \label{eq:upper-second},
\end{eqnarray}
where we used Theorem \ref{theo:cme-population-misspecified} in the last line. 
Below we derive convergence rates of the two terms \eqref{eq:upper-first}\eqref{eq:upper-second} separately, and then determine the decay schedule of $\varepsilon_m$ as $m \to \infty$ so that the two terms have the same rate.

{\bf The first term \eqref{eq:upper-first}.} 
We first have
\begin{eqnarray*}
&& \hat{C}_{\vartheta \y} ( \hat{C}_{\y \y} + \varepsilon_m  I)^{-1}  - C_{\vartheta \y} ( C_{\y \y} + \varepsilon_m  I)^{-1} \\
&=& \hat{C}_{\vartheta \y} ( \hat{C}_{\y \y} + \varepsilon_m  I)^{-1} - \hat{C}_{\vartheta \y} ( C_{\y \y} + \varepsilon_m  I)^{-1}  \\
&&+ \hat{C}_{\vartheta \y} ( C_{\y \y} + \varepsilon_m  I)^{-1}  - C_{\vartheta \y} ( C_{\y \y} + \varepsilon_m  I)^{-1} \\
&=& \hat{C}_{\vartheta \y} \left[  ( \hat{C}_{\y \y} + \varepsilon_m  I)^{-1} -  ( C_{\y \y} + \varepsilon_m  I)^{-1} \right] \\
&&+  (\hat{C}_{\vartheta \y} - C_{\vartheta \y}) ( C_{\y \y} + \varepsilon_m  I)^{-1} \\ 
&=& \hat{C}_{\vartheta \y}  ( \hat{C}_{\y \y} + \varepsilon_m  I)^{-1} ( C_{\y \y} - \hat{C}_{\y \y} )  ( C_{\y \y} + \varepsilon_m  I)^{-1}  \\
&&+  (\hat{C}_{\vartheta \y} - C_{\vartheta \y}) ( C_{\y \y} + \varepsilon_m  I)^{-1} ,
\end{eqnarray*}
where the last equality follows from the formula $A^{-1} - B^{-1} = A^{-1} (B-A) B^{-1}$ that holds for any invertible operators $A$ and $B$.
Note that $\hat{C}_{\vartheta \y} = \hat{C}_{\vartheta \vartheta}^{1/2} W_{\vartheta \y} \hat{C}_{\y \y}^{1/2}$ holds for some $W_{\Theta \F}: \H_\Rn \to \H_\Theta$ with $\| W_{\vartheta \y} \| \leq 1$ \citep[Theorem 1]{Bak73}. 
Using this, we have  
\begin{eqnarray*}
&& \left\| \hat{C}_{\vartheta \y} ( \hat{C}_{\y \y} + \varepsilon_m  I)^{-1}  - C_{\vartheta \y} ( C_{\y \y} + \varepsilon_m  I)^{-1} \right\| \\
&\leq& \left\| \hat{C}_{\vartheta \y}  ( \hat{C}_{\y \y} + \varepsilon_m  I)^{-1} ( C_{\y \y} - \hat{C}_{\y \y} )  ( C_{\y \y} + \varepsilon_m  I)^{-1}  \right\| \\
&&+ \left\| (\hat{C}_{\vartheta \y} - C_{\vartheta \y}) ( C_{\y \y} + \varepsilon_m  I)^{-1} \right\| \\
&=& \left\|\hat{C}_{\vartheta \vartheta}^{1/2} W_{\vartheta \y} \hat{C}_{\y \y}^{1/2} ( \hat{C}_{\y \y} + \varepsilon_m  I)^{-1} ( C_{\y \y} - \hat{C}_{\y \y} )  ( C_{\y \y} + \varepsilon_m  I)^{-1}  \right\| \\
&&+ \left\| (\hat{C}_{\vartheta \y} - C_{\vartheta \y}) ( C_{\y \y} + \varepsilon_m  I)^{-1} \right\| \\
&\leq& \left\|\hat{C}_{\vartheta \vartheta}^{1/2} \right\|    \varepsilon_m ^{-1/2} \left\| ( C_{\y \y} - \hat{C}_{\y \y} )  ( C_{\y \y} + \varepsilon_m  I)^{-1}  \right\| \\
&&+ \left\| (\hat{C}_{\vartheta \y} - C_{\vartheta \y}) ( C_{\y \y} + \varepsilon_m  I)^{-1} \right\| \\
&=& O_p\left(   \varepsilon_m^{-3/2} m^{-1/2}  + \sqrt{N(\varepsilon_m)} \varepsilon_m^{-1} m^{-1/2}\right) \quad (m \to \infty,\ \varepsilon_m \to 0),
\end{eqnarray*}
where the second inequality follows from $\| W_{\vartheta y} \| \leq 1$ and $\| \hat{C}_{\y \y}^{1/2} ( \hat{C}_{\y \y} + \varepsilon_m I)^{-1} \| \leq \varepsilon_m^{-1/2}$, and the last line from  \citet[Lemma 1.5.1]{Fuk15}; the quantity $N(\varepsilon)$ for any $\varepsilon > 0$  is defined by $N(\varepsilon) := \mathrm{Tr}[ C_{\y \y} (C_{\y \y} +\varepsilon I)^{-1} ]$, where ${\rm Tr}(A)$ denotes the trace of an operator $A$.   
Under our assumption on the eigenvalue decay rate of $C_{\y \y}$, we have $N(\varepsilon) \leq \frac{\beta b}{b-1} \varepsilon^{-1/b}$ \citep[Proposition 3]{CapDev07}, which implies that the above rate becomes 
$$
O_p\left(   \varepsilon_m^{-3/2} m^{-1/2}  + \varepsilon_m^{-1 - 1/2b} m^{-1/2}\right) \quad (m \to \infty,\ \varepsilon_m \to 0).
$$
From $m \varepsilon_m \to \infty$ and $\varepsilon_m \to 0$ (as we determine the schedule of $\varepsilon_m$ below), it is easy to show that the second term is slower and thus dominates the above rate.
This concludes that the rate of the first term \eqref{eq:upper-first} is
$$
 \left\|  \hat{C}_{\vartheta \y} ( \hat{C}_{\y \y} + \varepsilon_m  I)^{-1}  - C_{\vartheta \y} ( C_{\y \y} + \varepsilon_m  I)^{-1} \right\| \left\|  k_\Rn(\cdot,Y^n) \right\|_{\H_\Theta} = O_p\left( \varepsilon_m^{-1 - 1/2b} m^{-1/2}\right) \quad (m \to \infty,\ \varepsilon_m \to 0). 
$$

{\bf The second term \eqref{eq:upper-second}.}
Let $(\vartheta', \y')$ be an independent copy of  the random variables $(\vartheta, \y)$.
Note that for any $\psi \in \H_\Rn$ , we have 
\begin{eqnarray*}
\left<  C_{\vartheta \y} \psi, C_{\vartheta \y} \psi \right>_{\H_\Theta}
&=& \E \left[ k_\Theta(\vartheta, \vartheta') \psi(\y) \psi(\y') \right] \\
&=& \E \left[ \E[ k_\Theta(\vartheta, \vartheta') | \y, \y'] \psi(\y) \psi(\y') \right] \\
&=& \E \left[  G(\y, \y') \psi(\y) \psi(\y') \right] \\
&=& \left< (C_{\y \y} \otimes C_{\y \y}) G, \psi \otimes \psi \right>_{\H_\Rn \otimes \H_\Rn}.
\end{eqnarray*}
Similarly, for any $\psi \in \H_\Rn$ and $\tilde{Y}^n \in \supp(P_\Rn)$, we have 
\begin{eqnarray*}
 \left< C_{\vartheta \y} \psi, \E [k_{\Theta} (\cdot, \vartheta)  | \y = \tilde{Y}^n ] \right>_{\H_\Theta} 
 &=& \E\left[ \psi(\y') \E [k_{\Theta} (\vartheta', \vartheta)  | \y = \tilde{Y}^n ]  \right]  \\
  &=& \E\left[  \psi(\y') \E [k_{\Theta} (\vartheta', \vartheta)  | \y = \tilde{Y}^n, \y' ]   \right]  \\
  &=& \E\left[ \psi(\y')  G( \tilde{Y}^n, \y')  \right]  \\
&=& \left< (I \otimes C_{\y \y}) G, k_\Rn(\cdot, \tilde{Y}^n) \otimes \psi \right>_{\H_\Rn \otimes \H_\Rn},
\end{eqnarray*}
where $I :\H_\Rn \to \H_\Rn$ is the identity operator and  
$$ 
\left( (I \otimes C_{\y \y}) G \right) (\cdot, *) := \E[G(\cdot, \y' ) k_\Rn(\y', *) ].
$$

Now let $\psi := (C_{\y \y} + \varepsilon_m I)^{-1} k_\Rn(\cdot, r^*)$. 
Recall $ \mu_{\Theta | r^* }  = \E[k_\Theta(\cdot, \vartheta) | \y = r^*]$, which gives $\| \mu_{\Theta | r^*}\|_{\H_\Theta}^2 = G(r^*, r^*)$.
Then the square of \eqref{eq:upper-second} can be written as 

\begin{eqnarray*}
 && \left\|  C_{\vartheta \y} ( C_{\y \y} + \varepsilon_m  I)^{-1} k_\Rn(\cdot,r^*) - \mu_{\Theta | r^* } \right\|_{\H_\Theta}^2 \\
 &=& \left\|   C_{\vartheta \y} \psi \right\|_{\H_\Theta}^2  - 2 \left<  C_{\vartheta \y} \psi,   \mu_{\Theta | r^* }  \right>_{\H_\Theta} + \| \mu_{\Theta | r^*}\|_{\H_\Theta}^2 \\ 
 &=&   \left< (C_{\y \y} \otimes C_{\y \y}) G,\ \ (C_{\y \y} + \varepsilon_m I)^{-1} k_\Rn(\cdot, r^*) \otimes (C_{\y \y} + \varepsilon_m I)^{-1} k_\Rn(\cdot, r^*) \right>_{\H_\Rn \otimes \H_\Rn}  \\
 && - 2  \left< (I \otimes C_{\y \y}) G,\ \  k_\Rn(\cdot,  r^*) \otimes  (C_{\y \y} + \varepsilon_m I)^{-1} k_\Rn(\cdot, r^*) \right>_{\H_\Rn \otimes \H_\Rn} + G(r^*, r^*) \\
  &=&   \left< ( (C_{\y \y} + \varepsilon_m I)^{-1} C_{\y \y}  \otimes   (C_{\y \y} + \varepsilon_m I)^{-1} C_{\y \y} ) G,\ \ k_\Rn(\cdot, r^*) \otimes k_\Rn(\cdot, r^*) \right>_{\H_\Rn \otimes \H_\Rn}  \\
 && - 2  \left< (I \otimes   (C_{\y \y} + \varepsilon_m I)^{-1} C_{\y \y}) G,\ \  k_\Rn(\cdot,  r^*) \otimes  k_\Rn(\cdot, r^*) \right>_{\H_\Rn \otimes \H_\Rn} + G(r^*, r^*) \\
 &=& \Big\langle   \Big\{ (C_{\y \y} + \varepsilon_m I)^{-1} C_{\y \y} \otimes (C_{\y \y} + \varepsilon_m I)^{-1} C_{\y \y}  - I \otimes (C_{\y \y} + \varepsilon_m)^{-1} C_{\y \y} \\ 
 && - (C_{\y \y} + \varepsilon_m I)^{-1} C_{\y \y} \otimes I + I \otimes I \Big \} G,\quad k_\Rn (\cdot,r^*) \otimes k_\Rn(\cdot,r^*) \Big\rangle_{\H_\Rn \otimes \H_\Rn} \\
 &\leq & \Big\|   \Big\{ (C_{\y \y} + \varepsilon_m I)^{-1} C_{\y \y} \otimes (C_{\y \y} + \varepsilon_m I)^{-1} C_{\y \y}  - I \otimes (C_{\y \y} + \varepsilon_m)^{-1} C_{\y \y} \\ 
 && - (C_{\y \y} + \varepsilon_m I)^{-1} C_{\y \y} \otimes I + I \otimes I \Big \} G \Big\|_{\H_\Rn \otimes \H_\Rn}\  \Big\| k_\Rn (\cdot,r^*) \otimes k_\Rn(\cdot,r^*) \Big\|_{\H_\Rn \otimes \H_\Rn}.
\end{eqnarray*}
Let $(\phi_i)_{i=1}^\infty \subset \H_\Rn$ be the eigenfunctions of $C_{\y \y}$ and $(\lambda_i)_{i=1}^\infty$ be the associated eigenvalues such that $\lambda_1 \geq \lambda_2 \geq \dots \geq 0$. 
Then the eigenfunctions and eigenvalues of the operator $C_{\y \y} \otimes C_{\y \y}$ are given as $(\phi_i \otimes \phi_j)_{i,j=1}^\infty$ and $(\lambda_i \lambda_i)_{i,j=1}^\infty$, respectively.  
Note that $(C_{\y \y} + \varepsilon_m I)^{-1}C_{\y \y}^2 \phi_i = (\frac{\lambda_i^2}{1+\varepsilon_m}) \phi_i$.
Note also that our assumption $G \in \mathrm{Range}(C_{\y \y} \otimes C_{\y \y})$ implies that there exists some $\xi \in \H_\Rn \otimes \H_\Rn$ such that $G = ( C_{\y \y} \otimes C_{\y \y}) \xi$.
Using these identities and Parseval's identity, we have
\begin{eqnarray*}
 && \Big\|   \Big\{ (C_{\y \y} + \varepsilon_m I)^{-1} C_{\y \y} \otimes (C_{\y \y} + \varepsilon_m I)^{-1} C_{\y \y}  - I \otimes (C_{\y \y} + \varepsilon_m)^{-1} C_{\y \y} \\ 
 && - (C_{\y \y} + \varepsilon_m I)^{-1} C_{\y \y} + I \otimes I \Big \} G \Big\|_{\H_\Rn \otimes \H_\Rn}^2 \\
&=&  \Big\|   \Big\{ (C_{\y \y} + \varepsilon_m I)^{-1} C_{\y \y} \otimes (C_{\y \y} + \varepsilon_m I)^{-1} C_{\y \y}  - I \otimes (C_{\y \y} + \varepsilon_m)^{-1} C_{\y \y} \\ 
 && - (C_{\y \y} + \varepsilon_m I)^{-1} C_{\y \y} + I \otimes I \Big \} (C_{\y \y} \otimes C_{\y \y}) \xi \Big\|_{\H_\Rn \otimes \H_\Rn}^2 \\ 
 &=& \sum_{i,j} \left\{ \frac{ \lambda_i^2 }{\lambda_i + \varepsilon_m} \frac{\lambda_j^2}{\lambda_j + \varepsilon_m} - \frac{\lambda_i \lambda_j^2}{\lambda_j + \varepsilon_m} - \frac{\lambda_i^2 \lambda_j}{\lambda_i + \varepsilon_m} + \lambda_i \lambda_j \right\}^2 \left< \phi_i \otimes \phi_j, \xi \right>_{\H_\Rn \otimes \H_\Rn}^2  \\
 &=& \sum_{i,j} \left\{  \frac{\varepsilon_m^2 \lambda_i \lambda_j}{(\lambda_i + \varepsilon_m) (\lambda_j + \varepsilon_m) } \right\}^2 \left< \phi_i \otimes \phi_j, \xi \right>_{\H_\Rn \otimes \H_\Rn}^2 \\
 &\leq & \varepsilon_m^4 \| \xi \|_{\H_\Rn \otimes \H_\Rn}^2.
\end{eqnarray*}
From this the second term \eqref{eq:upper-second} is upper-bounded as
\begin{eqnarray*}
&& \left\|  C_{\vartheta \y} ( C_{\y \y} + \varepsilon_m  I)^{-1} k_\Rn(\cdot,r^*) - \mu_{\Theta | r^* } \right\|_{\H_\Theta} \\
&\leq&   \varepsilon_m \| \xi \|_{\H_\Rn \otimes \H_\Rn}^{1/2} \Big\| k_\Rn (\cdot,r^*) \otimes k_\Rn(\cdot,r^*) \Big\|_{\H_\Rn \otimes \H_\Rn}^{1/2} = O(\varepsilon_m), \quad (m \to \infty,\ \varepsilon_m \to 0).
\end{eqnarray*}
The obtained rates for the two terms  \eqref{eq:upper-first}\eqref{eq:upper-second} can be balanced by setting $\varepsilon_m = C m^{- \frac{b}{1+4b}}$ for any fixed constant $C > 0$, and this gives the rate in the assertion.
\end{proof}

\section{Experiments on Sophisticated Production Simulator}


\begin{figure*}[t]
  \centering
  \includegraphics[width=12cm]{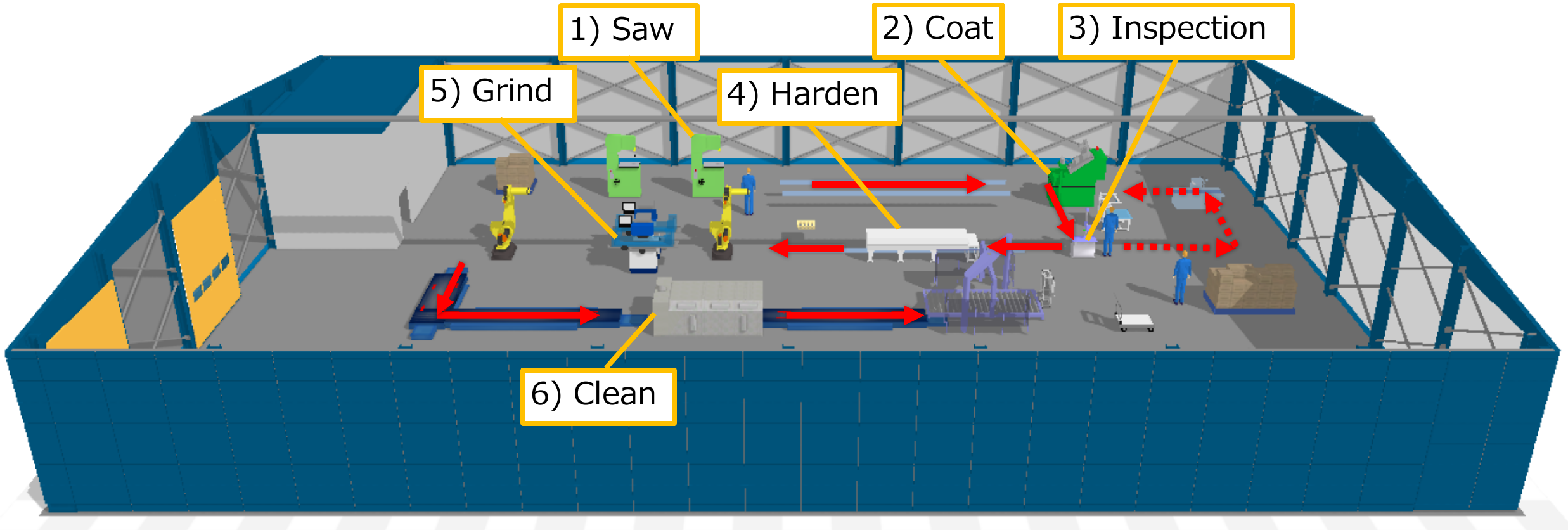}
  \caption{Illustration of the manufacturing process  (metal processing factory) for producing valves.}
  \label{fig:ACME_overview_3D}
\end{figure*}

\begin{table*}[t]
\caption{Summary of the true and estimated parameters for the experiment on the sophisticated simulation model. $T_{\rm BF}$ represents the mean time between failures, and $T_R$ the mode of repair time for each process.  The parameter estimates are the posterior means of the generated parameters, averaged over 10 independent trials, and the corresponding standard deviations are shown in brackets.}
\begin{center}
\begin{tabular}{c|cc|cc|cc|cc|cc|cc}
Process  & \multicolumn{2}{c|}{Saw} & \multicolumn{2}{c|}{Coat}  & \multicolumn{2}{c|}{Inspection} & \multicolumn{2}{c|}{Harden} & \multicolumn{2}{c|}{Grind} & \multicolumn{2}{c}{Clean} \\ \hline
& $T_{\rm BF}$ & $T_{\rm R}$ & $T_{\rm BF}$ & $T_{\rm R}$ & $T_{\rm BF}$ & $T_{\rm R}$ & $T_{\rm BF}$ & $T_{\rm R}$ & $T_{\rm BF}$ & $T_{\rm R}$ & $T_{\rm BF}$ & $T_{\rm R}$ \\
Parameters & $\theta_1$ & $\theta_2$ & $\theta_3$ & $\theta_4$ & $\theta_5$ & $\theta_6$ & $\theta_7$ & $\theta_8$ & $\theta_9$ & $\theta_{10}$ & $\theta_{11}$ & $\theta_{12}$ \\ \hline
true $\theta^{(0)}$ ($x<140$) & 100 & 25 & 200 & 10 & 70 & 20 & 200 & 20 & 75 & 15 & 120 & 20 \\
true $\theta^{(1)}$ ($x>140$) & 100 & 25 & 200 & 10 & {\bf 50} & 20 & 200 & 20 & 75 & 15 & 120 & 20 \\ \hline
posterior mean  & 104.6 & 25.3 & 181.2 & 7.1 &  70.9 & 18.9 & 180.1 & 18.9 & 72.5 & 15.2 & 121.7 &  20.2\\ [-1.pt]
for ordinary reg. & \small (4.4) & \small (1.2) & \small (7.9) & \small (0.3) & \small (7.6) & \small (0.8) & \small (8.4) & \small (0.3) & \small (3.9) & \small (0.9) & \small (5.1) & \small (1.2)\\ [1pt]
posterior mean & 99.4 & 25.4 & 181.2 & 7.9 & {\bf 54.5} & 22.1 & 176.4 & 17.9 & 75.6 & 14.9 & 120.6 & 20.4 \\ [-1.pt]
for covariate shift & \small (6.1) & \small (0.9) & \small (7.5) & \small (0.1) & \small (6.2) & \small (2.2) & \small (4.4) & \small (0.1) & \small (3.6) & \small (0.5) & \small (5.1) & \small 0.7
\end{tabular}
\label{table:ACME_parameters}
\end{center}
\end{table*}

\label{subsubsec:ResultOfRealisticExperiment}
\begin{figure}[t]
  \centering
  \includegraphics[width=12cm]{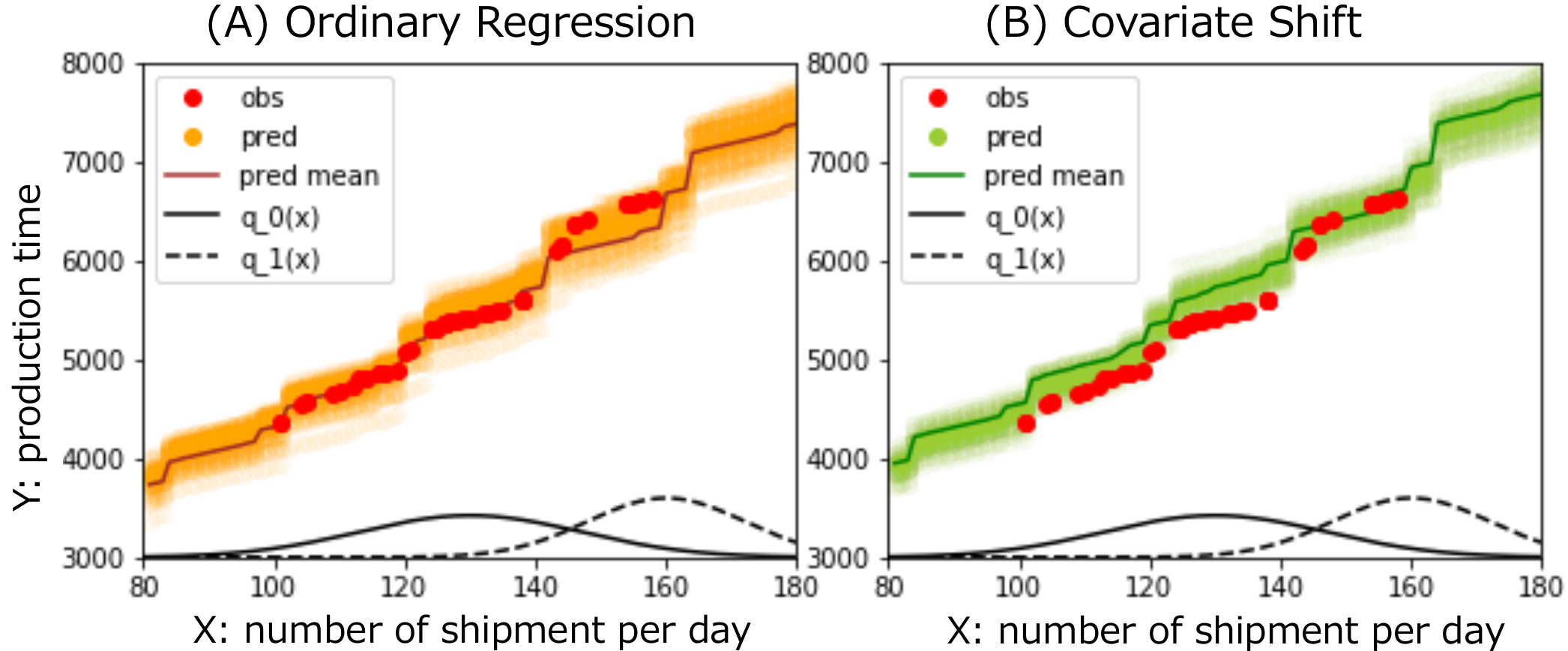}
  \caption{Results of ordinary regression and covariate shift adaptation, for the experiment on the sophisticated model.
    (A) Results of our method {\em without} covariate shift adaptation: training data (red points), generated predictive outputs (orange) and their means (brown curve).
  (B) Results of our method {\em with} covariate shift adaptation: training data (red points), generated predictive outputs (light green) and their means (green curve). 
  $q_0(x)$ and $q_1(x)$ are input densities for training and prediction, respectively. }
  \label{fig:result_ACME}
\end{figure}

We performed experiments on a sophisticated but more complicated simulator for industrial manufacturing processes than the one in Sec.~\ref{subsec:ExperimentProductSimulator}.
We used a simulation model constructed with the software package WITNESS (\url{https://www.lanner.com/en-us/}) described in Fig.~\ref{fig:ACME_overview_3D}.
It models a metal processing factory for producing valves (products) from metal pipes, with six primary processes of  1) ``saw'', 2) ``coat'', 3) ``inspection'', 4) ``harden'', 5) ``grind'', and  6) ``clean.'' 
Each process consists of complicated procedures such preparation, waiting, and machine repair in case of a trouble.

\subsection{Setting}

As in Sec.~\ref{subsec:ExperimentProductSimulator},  the input space is $\X = (0, \infty)$ and each input $x$ represents the number of products required to make, and the resulting output $y(x) = R(x) + e(x)$ is the length of time needed to produce that number of products.

The mapping $x \to r(x, \theta)$ consists of the above six processes, and each of them contains two parameters for machine downtime due to failures: the mean time between failures ($T_{\rm BF}$), and the mode of repair time ($T_{\rm R}$).
Thus, in total, there are 12 parameters, i.e., $\theta = ( \theta_1,..., \theta_{12} )^\top \in \Theta \subset \R^{12}$, where $\theta_{2j} = T_{\rm BF}^{(j)}$ and $\theta_{2j + 1} = T_{\rm R}^{(j)}$ for the $j\ (= 1,\dots,6$)-th process (see Table~\ref{table:ACME_parameters}). 
In each process (say the $j$-th process), the time between two failures follows the negative exponential distribution with the mean time $\theta_{2j} = T_{\rm BF}^{(j)}$, and the time required for repair follows the Erlang distribution with the mode of repair time  $\theta_{2j + 1} = T_{\rm R}^{(j)}$  and the shape parameter 3.
We set the prior distribution  $\pi(\theta)$ by defining the uniform distribution over [0, 300] for  $\theta_{2j}$  and that over [0,30] for $\theta_{2j + 1}$, and taking the product of the uniform distributions for all the parameters ($j = 1, \dots, 6)$.

In a similar manner to the experiment in Section~\ref{subsec:ExperimentProductSimulator}, we defined the regression function $R(x)$ of the data generating process as $R(x) = r(x,\theta^{(0)})$ for $x < 140$ and  $R(x) = r(x,\theta^{(1)})$ for $x \geq 140$, where $\theta^{(0)}$ and $\theta^{(1)}$ are the ``true'' parameters for training and prediction, and defined in Table~\ref{table:ACME_parameters}.
We set the input densities $q_0(x)$ and $q_1(x)$ for training and prediction as $N(130, 15)$ and $N(160, 12)$, respectively.
The size of training data is $n=50$, and the number of simulations is $m=400$.
We set the noise process of the data generating process to be independent Gaussian, $e(x)=\epsilon \sim N(0, 300)$.
We set the constants $\sigma^2, \sigma_\Theta^2  > 0$ in the kernels $k_\Rn$ and $k_\Theta$ by the median heuristic using the simulated pairs $(\bar{\theta}_j, \bar{Y}^n_j)_{j=1}^m$, and the regularization constant to be $\varepsilon=0.1$.

\subsubsection{Details of the Simulation Model}
We explain below qualitative details of the six processes in the simulation model constructed with the WITNESS software package. 

{\bf Cutting process: }
The manufacturing process begins with the arrival of pipes, all of which have the same diameter and length of 30 cm.
These pipes arrive at a fixed time interval, depending on the vendor's supply schedule.
Subsequently, each pipe is cut into 10-cm sections along the length, resulting in three pieces.
A worker is assigned for this process to perform changeover, repair, and disconnection operations. 
This worker takes a break once every eight hours.
Then the small pieces obtained are transferred to the coating process by a conveyor belt.

{\bf Coating process: }
The small pieces are coated for protection by a coating machine. 
The machine processes six pieces in a batch manner at once.
A coating material must have been prepared in the coating machine, before those pieces have arrived; otherwise, the quality of those pieces will be degraded by heat.
When the pieces ride on the belt conveyor, a sensor detects them and the coating material is prepared.

{\bf Inspection process:}
After the coating process, each piece is placed in an inspection waiting buffer.
An inspector picks up those pieces one by one from the waiting buffer, and inspects the coating quality.
If a piece fails the quality inspection, the inspector places that piece in the recoating waiting buffer.
The coating machine must process the pieces of the recoating buffer preferentially.
When pieces pass the quality inspection, the inspector sends those pieces to the curing step.

{\bf Harden process: }
In the harden (quenching) process, up to 10 pieces are processed simultaneously in a first-come first-out basis, and each piece is quenched for at least one hour.

{\bf Grind process: }
The quenched pieces are polished to satisfy a customer's specifications.
Two polishing machines with the same priority are available. 
Each machine uses special jigs to process four pieces simultaneously, and produces two different types of valves. 
Further, 10 jigs exist in the system, and when not in use, they are placed in a jig storage buffer.

A loader fixes four pieces with a jig and sends it to the polishing machine.
The polishing machine sends the jig and the four pieces to an unloader, once polishing is done.
The unloader sends the finished pieces to a valve storage area and the jig to a jig return area.
The two types of valves are separated, and placed in a dedicated valve storage buffer.
When a jig is required to be used again, it is returned by a jig return conveyor to the jig storage buffer.

{\bf Cleaning process: }
Valves issued from a valve storage area are cleaned before shipment.
In the washing machine, five stations are available where valves can be placed one at a time, and the valves are cleaned in these stations.
Up to 10 valves of each type can be washed simultaneously.
When the valve type is changed, the cleaning head must be replaced.

\subsection{Results }

The true 12 parameters are estimated as the posterior means of generated parameters, and their averages and standard deviations over 10 independent trials are shown in the bottom rows in Table~\ref{table:ACME_parameters}.
Most of the true parameters are estimated for both of the ordinary regression and covariate shift settings.
%

Fig.~\ref{fig:result_ACME}~(A) and (B) describe predictive outputs and their means given by the proposed method, which fit well for both the ordinary and covariate shift settings.
%
The RMSE for predictive outputs by the proposed method with covariate shift adaptation, calculated for test data generated from $q_1(x)$, is $1.48 \times 10^2$.   
On the other hand, the RMSE on the same test data for the proposed method {\em without} covariate shift adaptation (i.e., setting $\beta(X_i) = 1, i =1,\dots,n$ in the importance-weighted kernel) is $1.64 \times 10^3$. 
This confirms that the use of the importance-weighted kernel indeed works for covariate shift adaptation.

In this experiment, approximately 3~[s] was required for one evaluation of the simulation model $r(x,\theta)$ with the authors' computational environment. 
Thus, the dominant factor in the computational cost was that of simulations.

\end{document}